\documentclass{article}

 \usepackage[preprint]{neurips_2026}

\usepackage[utf8]{inputenc} 
\usepackage[T1]{fontenc}    
\usepackage{hyperref}       
\usepackage{url}            
\usepackage{graphicx}       
\usepackage{booktabs}       
\usepackage{array}          
\usepackage{tabularx}       
\usepackage{amsmath}        
\usepackage{amssymb}        
\usepackage{amsthm}         
\usepackage{amsfonts}       
\usepackage{nicefrac}       
\usepackage{microtype}      
\usepackage{xcolor}         
\usepackage[table]{xcolor}
\definecolor{sealblue}{RGB}{230, 236, 247}  
\usepackage{wrapfig}
\usepackage{graphicx}
\usepackage{subcaption}
\usepackage{enumitem}
\newtheorem{definition}{Definition}
\newtheorem{proposition}{Proposition}
\newtheorem{lemma}{Lemma}

\newcommand{\sophia}{\textbf{\textsc{Sophia}}}

\title{Can We Break LLMs Out of Self-Loops? Fine-Grained Reasoning Control with Activation Steering}

\author{
Sheldon Yu$^{1}$, Tong Yu$^{2}$, Xunyi Jiang$^{1}$, Rohan Surana$^{1}$, Gagan Mundada$^{1}$, \\
\textbf{Sungchul Kim$^{2}$, Lina Yao$^{3}$, Julian McAuley$^{1}$, Junda Wu$^{1}$}
\\
$^{1}$UC San Diego \quad
$^{2}$Adobe Research \\
$^{3}$University of New South Wales \\
\texttt{\{ziy040,xuj003,rsurana,gmundada,jmcauley,juw069\}@ucsd.edu} \\
\texttt{\{tyu,sukim\}@adobe.com} \quad
\texttt{lina.yao@unsw.edu.au} \\
}

\begin{document}

\maketitle
\begin{abstract}
Extended reasoning has become standard for frontier Large Language Models (LLMs), yet the trajectories these models produce remain largely uncontrollable. Existing methods for shaping how a model reasons are prompt based approaches and operate at the input level, offering no fine-grained control over the reasoning process itself. Related work analyzes and discovers latent transition dynamics in the reasoning traces from Large Language Models. Building on this, we statistically characterize these states, and show that failure trajectories get stuck in self-loops, exhausting the token budget without progress toward the final answer. To intervene on these failures, We propose \sophia{}: \textbf{S}teering \textbf{O}f reasoning 
\textbf{P}rocesses via \textbf{H}idden-state \textbf{I}ntervention and \textbf{A}ctivations. We treat each reasoning trace as a sequence of latent states rather than an unstructured texts, and investigate whether inference time interventions can provide fine-grained control over the self-looping reasoning process. We classify every prefix to a latent state, record step level transitions, and use them to construct a bank of steering vectors indexed by state pairs.
At inference time, a controller infers the current state and, given a target state, retrieves the corresponding vector and can also detect self-loops online from the transition structure to prevent the model from sinking into a reasoning black hole.
Through extensive experiments, our method reliably intervenes on self-loop failures, with steering vectors that generalize to different state pairs.
End task accuracy and token efficiency indicate that fine-grained controllability results in better reasoning quality.
\end{abstract}


\begin{figure*}[t]
    \centering
    \includegraphics[width=0.98\linewidth]{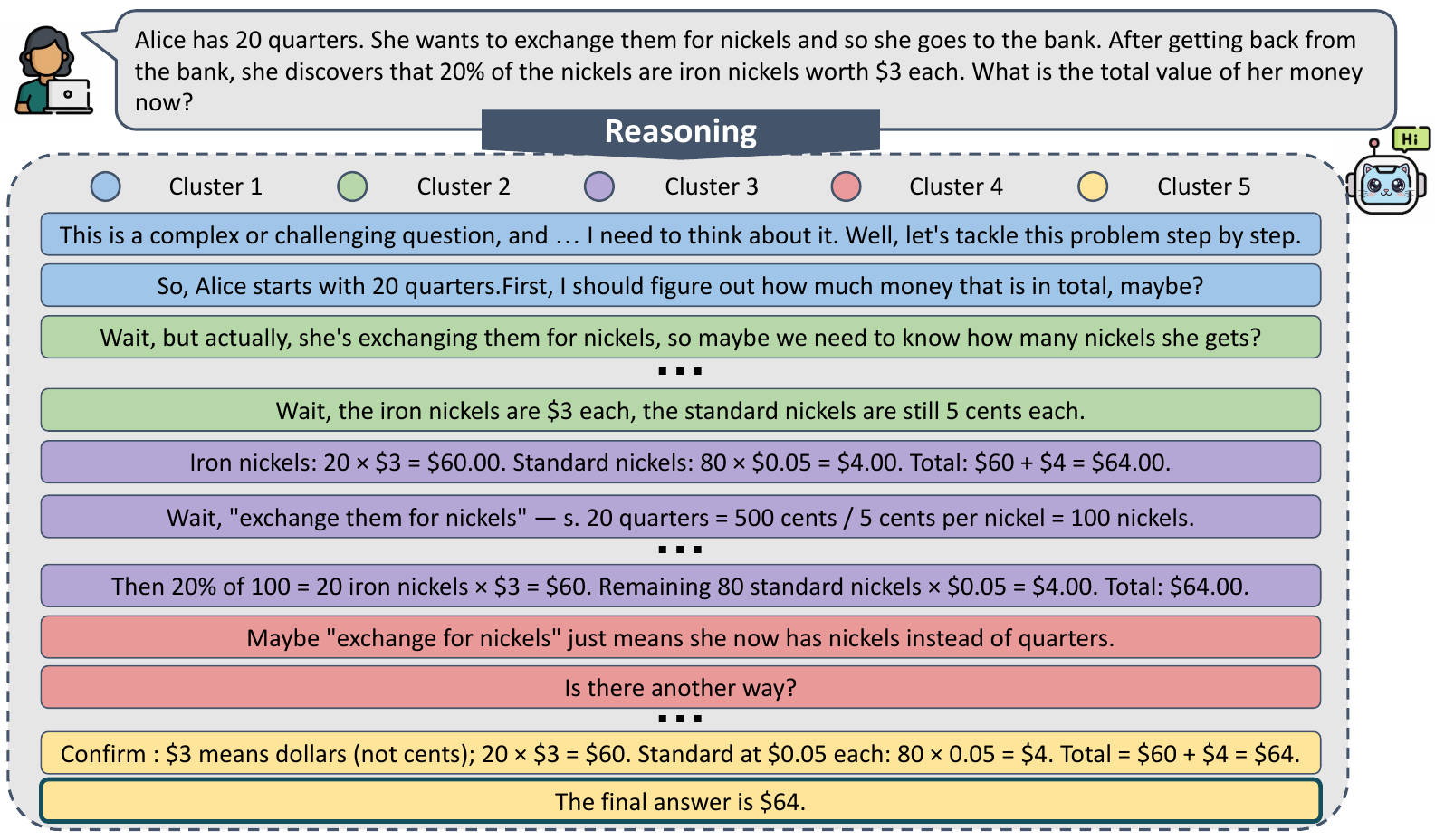}
    \caption{\textbf{Self-loops in LRM reasoning traces.}
The model computes the correct value early (purple, \$64) but cycles
through redundant verification and re-questioning steps (red) before
committing, wasting budget on work it has already done. Box colors
denote latent reasoning states $K = 5$ that are induced unsupervisedly from
activations rather than assigned from a hand-defined thought
taxonomy~\cite{chen2025seal}, with transitions discovered automatically
from generated traces. This makes failure modes like the loop above
targetable as transition events at finer granularity than token- or
prompt-level control allows.
}
\end{figure*}

\section{Introduction}
Large Reasoning Models (LRMs), such as OpenAI’s o1-series  have demonstrated compelling capabilities for complex reasoning tasks via the extended
chain-of-thought \cite{wei2022chain,kojima2022large,surana2026generate,wu2025doc,wu2024decot} reasoning mechanism. By generating intermediate reasoning trace, a model can allocate additional computation to arithmetic \cite{ahn2024large, luo2025wiz}, planning \cite{wang2024descri, valmeekam2023plann}, and multi-hop question answering \cite{zhong-etal-2024-debug}. These models are designed to explore diverse reasoning strategies, reflect on their decisions, and iteratively refine solutions. The same mechanism, however, can also waste computation.
Long traces often include repeated verification \cite{marjano}, injected prompts \cite{wang2026cacheprune}, stuck in self-loops \cite{keskar2019ctrl, welleck2019neural}, or late detours that increase latency without improving the final answer \cite{arcuschin2025c}.
For reasoning models, the question is therefore not only how to elicit more thinking, but how to keep the reasoning trajectory moving through productive states. This paper takes a transition-level view of that problem. We treat each reasoning 
trace as a sequence of latent reasoning states rather than only as a sequence of 
tokens. At step $t$, the prefix $\xi_t$ is mapped to a discrete state $z_t$, and 
the next generated step induces a transition $z_t \to z_{t+1}$. Under this view, the most concrete and damaging form of unproductive reasoning is the \emph{self-loop}: the model repeatedly transitions back to the same state, 
re-verifying, restating, or revisiting prior work without producing new information, frequently exhausts its token budget on redundant computation before reaching an 
answer as shown in Figure\ref{fig:token-counts}.Moreover, we show that overlong reasoning highly correlated with incorrect examples, resulting a trajectory problem, cycle through a small set of states without step to the next meaningful reasoning step. This perspective suggests a more precise target for inference-time control. Instead 
of steering the whole response toward a global attribute, we can ask whether a model 
can be guided from its current reasoning state to a desired next state—and, in 
particular, whether self-loops can be detected and broken before the token budget 
runs out.

Prior work on controlling reasoning behaviour falls into two camps. 
Reinforcement-learning approaches—L1 \cite{aggarwal2025l1} for length control 
and SCoRe \cite{kumar2024score} for self-correction—shape the generation policy 
as a whole, but require expensive training and offer no mechanism for intervening 
once the model is deployed. Inference-time methods avoid retraining but intervene 
at coarse granularity: thought-switching penalties \cite{wang2023selfconsistency} and 
underthinking-triggered prompt insertions \cite{zhang2025smartswitch} act on the 
surface token stream.

Yet standard steering vectors are typically state-agnostic and is applied uniformly throughout generation, regardless of where the 
model is in its reasoning trajectory \cite{turner2023steering,tan2024analysing,huang2026amps}.
For CoT reasoning, this is a structural 
mismatch rather than a tuning issue. The corrective signal needed to break out of a 
redundant verification loop is fundamentally different from the signal needed to 
consolidate a candidate answer or continue a multi-step decomposition; collapsing 
these into one direction yields a force that is too weak in the cases that matter 
and too strong elsewhere. Figure~\ref{fig:tsne_grid} confirms this empirically: 
contrast directions extracted for different transition types separate into distinct 
clusters in activation space, indicating that the signals required for different 
transitions are not perturbations of a single global vector but genuinely different 
directions. The transition graph itself is highly structured 
Figure~\ref{fig:trajectory_heatmap_raw}, with self-loops and a small set of 
dominant transitions accounting for most observed steps. Figure~\ref{fig:tsne_grid}
makes the consequence concrete: a state-agnostic vector that breaks one self-loop 
fails on another, while a transition-specific vector handles both. Effective control 
therefore requires a dictionary of context-aware, source-to-target directions rather 
than a single global vector.

We propose \sophia{}: \textbf{S}teering \textbf{O}f reasoning 
\textbf{P}rocesses via \textbf{H}idden-state \textbf{I}ntervention and \textbf{A}ctivations, a 
training-free framework that addresses this gap. First, we collect model-generated 
reasoning traces and segment them into step-level prefixes. 

\begin{wrapfigure}{r}{0.6\linewidth}
    \centering
    \includegraphics[width=1\linewidth]{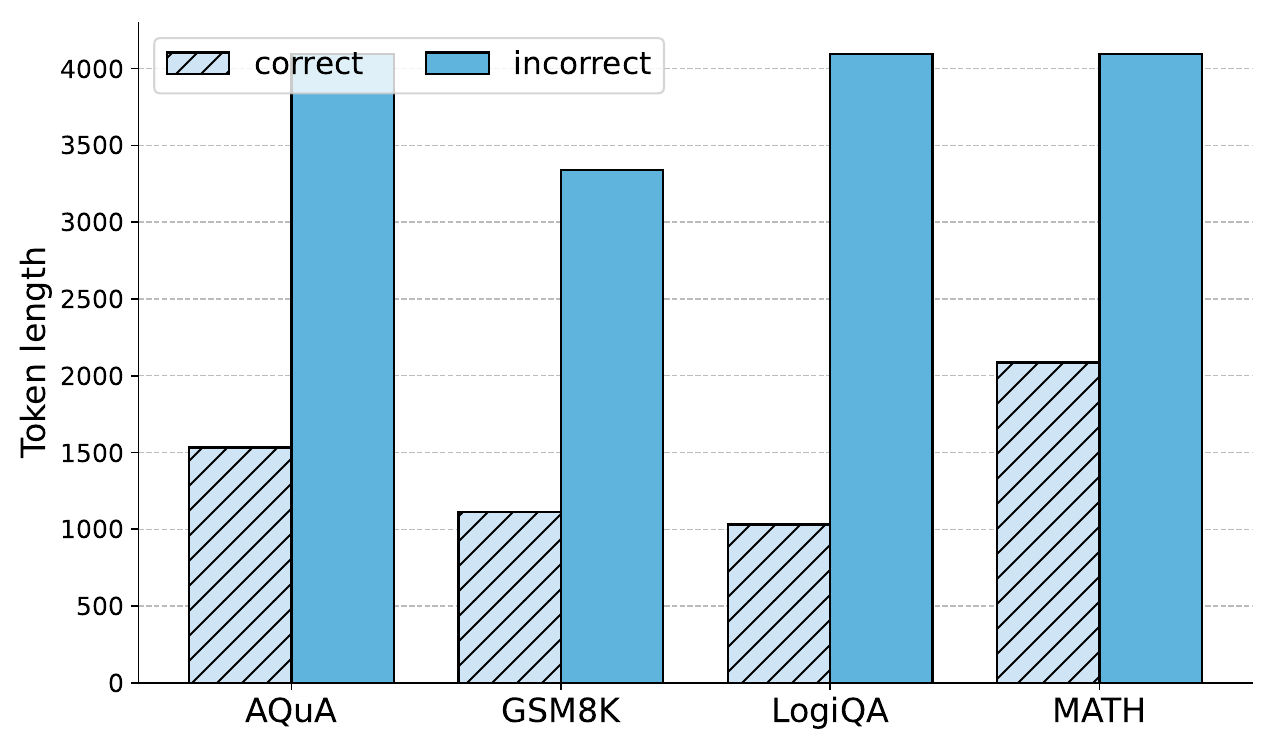}
    \caption{Average reasoning-trace token length for \emph{correct} (hatched) vs.\ \emph{incorrect} (solid) responses across \textsc{aqua}, \textsc{gsm8k}, \textsc{logiqa}, and \textsc{math}. Incorrect responses consume substantially more tokens on every dataset, suggesting that longer traces correlate with failure rather than additional useful reasoning.}
    \label{fig:token-counts}
\end{wrapfigure}

Second, we fit a latent 
state abstraction over prefix activations and record loop statistics for each observed transition. Third, 
for each source-target-context cell with statistics and semantics support, we extract a 
contrastive residual-stream direction that separates steps realizing the target 
transition from competing transitions out of the same source state. Finally, at 
inference time, a controller infers the model's current state, retrieves the 
appropriate vector for a chosen target successor, and applies it through a 
support-aware gate before the next reasoning step is generated. The 
same controller can detect self-loops online from the transition structure and 
trigger a corrective transition without external supervision, this self-loop 
correction setting is well defined by our analysis, while the underlying 
mechanism applies to arbitrary state-to-state transitions.

Our contributions are:
\begin{itemize}[leftmargin=1.5em,itemsep=0.05em]
    \item We reframe model reasoning control as the problem of steering latent 
    reasoning-state transitions, with the current prefix state and desired successor 
    state as the basic unit of intervention. 

    \item We characterize the transition structure of reasoning traces both 
    semantically and statistically, identifying systematic patterns that distinguish 
    correct from incorrect trajectories with self-loops emerging as a dominant 
    failure mode.

    \item We show that contrastive residual-stream directions extracted at the 
    transition level can reliably induce specified state-to-state transitions and 
    break self-loops at inference time, without modifying model weights.

    \item We develop \sophia{}, a training-free framework that operationalizes this 
    mechanism into an end-to-end inference-time controller, providing fine-grained 
    mid-trace control of reasoning at accuracy and token efficiency comparable to 
    strong steering baselines.
\end{itemize}

\begin{figure}[ht]
  \centering
  \begin{minipage}[t]{0.48\linewidth}
    \centering
    \includegraphics[width=0.48\linewidth]{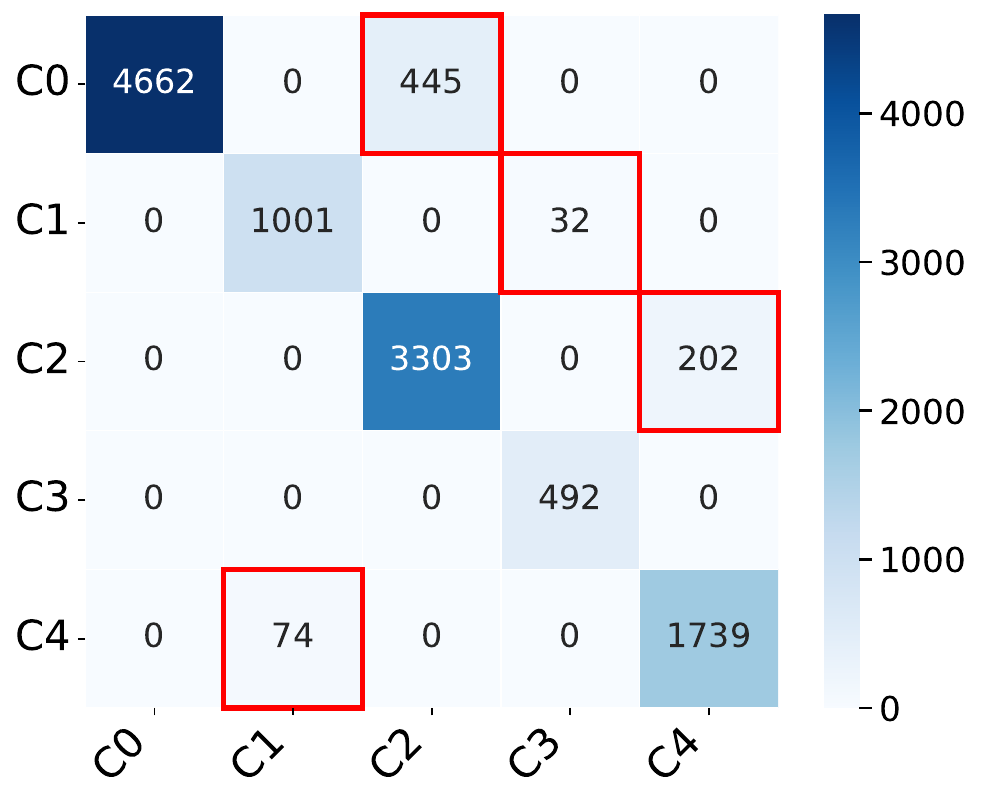}\hfill
    \includegraphics[width=0.48\linewidth]{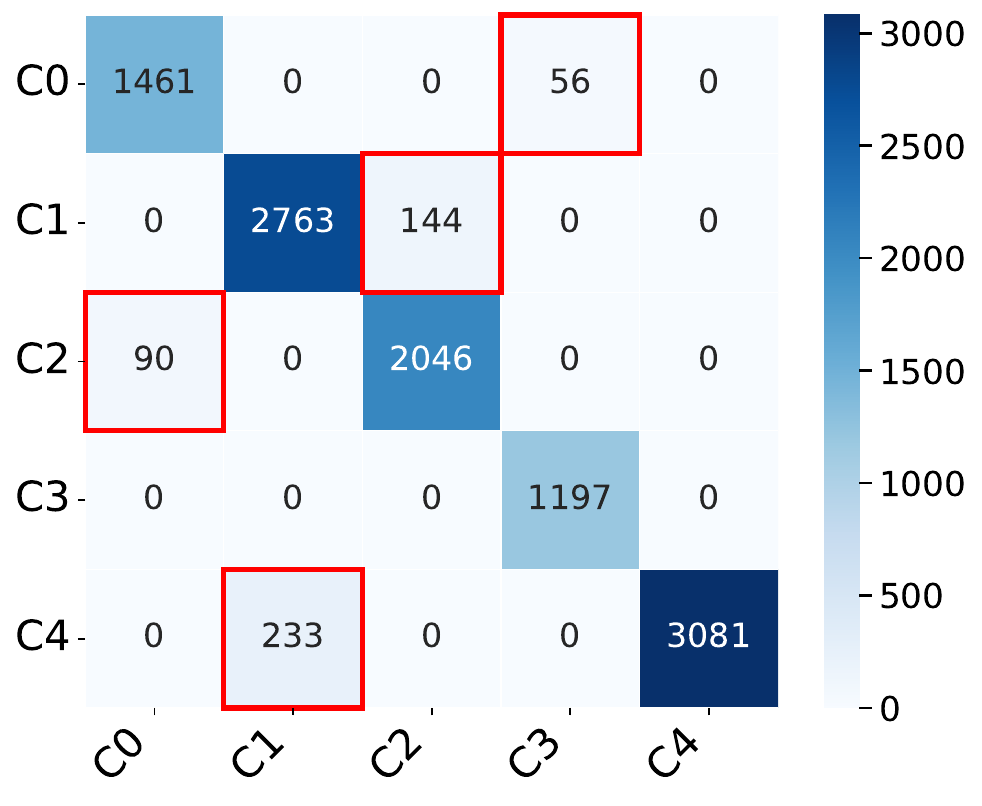}\\[0.3em]
    \includegraphics[width=0.48\linewidth]{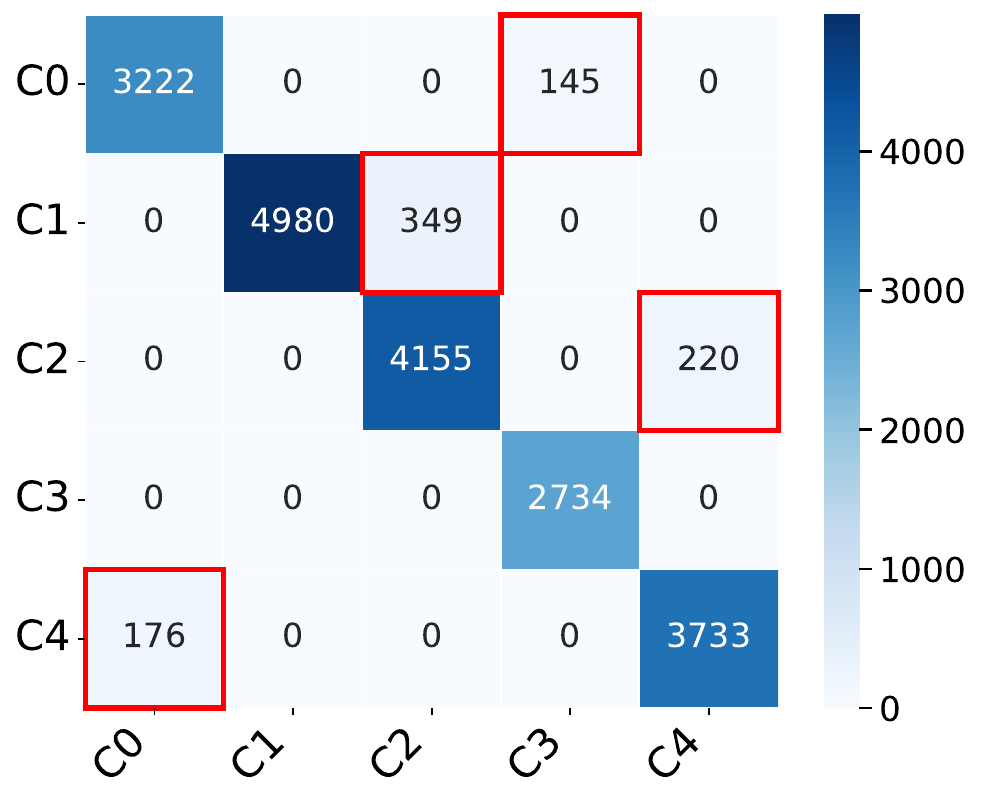}\hfill
    \includegraphics[width=0.48\linewidth]{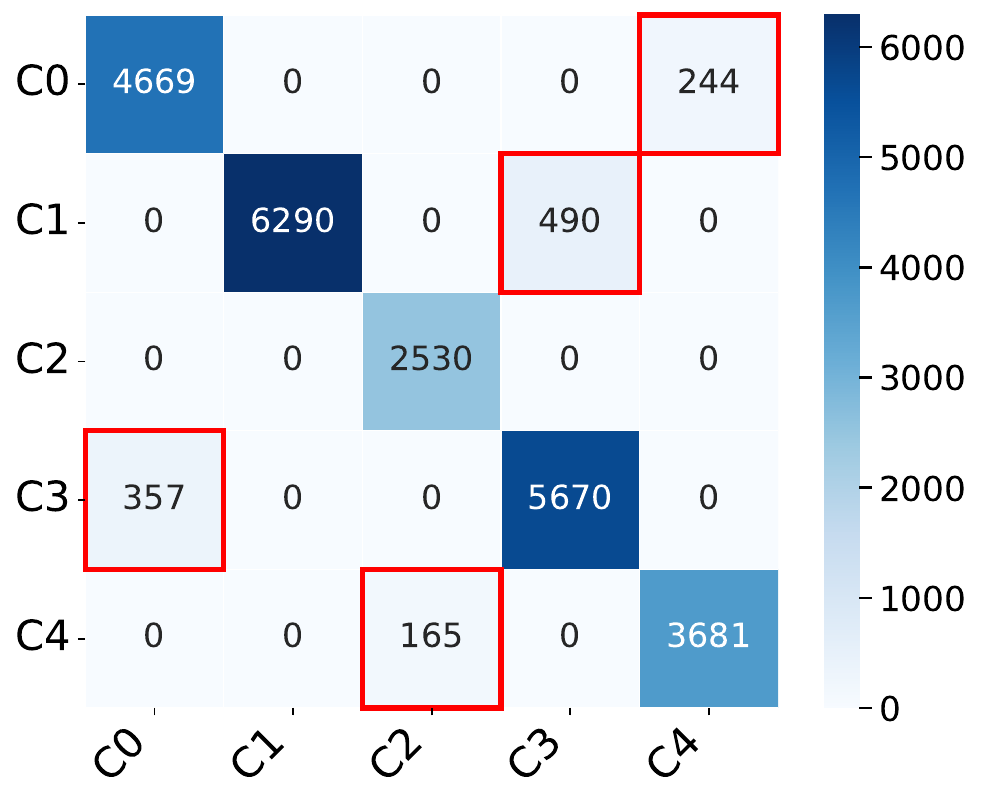}
    \subcaption{Inter-step cluster transition counts (raw, unnormalized) in KMeans-ID order on \textsc{gsm8k} (top-left), \textsc{aqua} (top-right), \textsc{logiqa} (bottom-left), and \textsc{math} (bottom-right); per-panel colorbars show dataset-specific scales. Diagonal entries are \emph{stayers} (self-loops); \textcolor{red}{red} off-diagonals are \emph{crossers}. Heavy diagonals indicate consecutive steps overwhelmingly stay within the same cluster.}
    \label{fig:trajectory_heatmap_raw}
  \end{minipage}\hfill
  \begin{minipage}[t]{0.48\linewidth}
    \centering
    \includegraphics[width=0.48\linewidth]{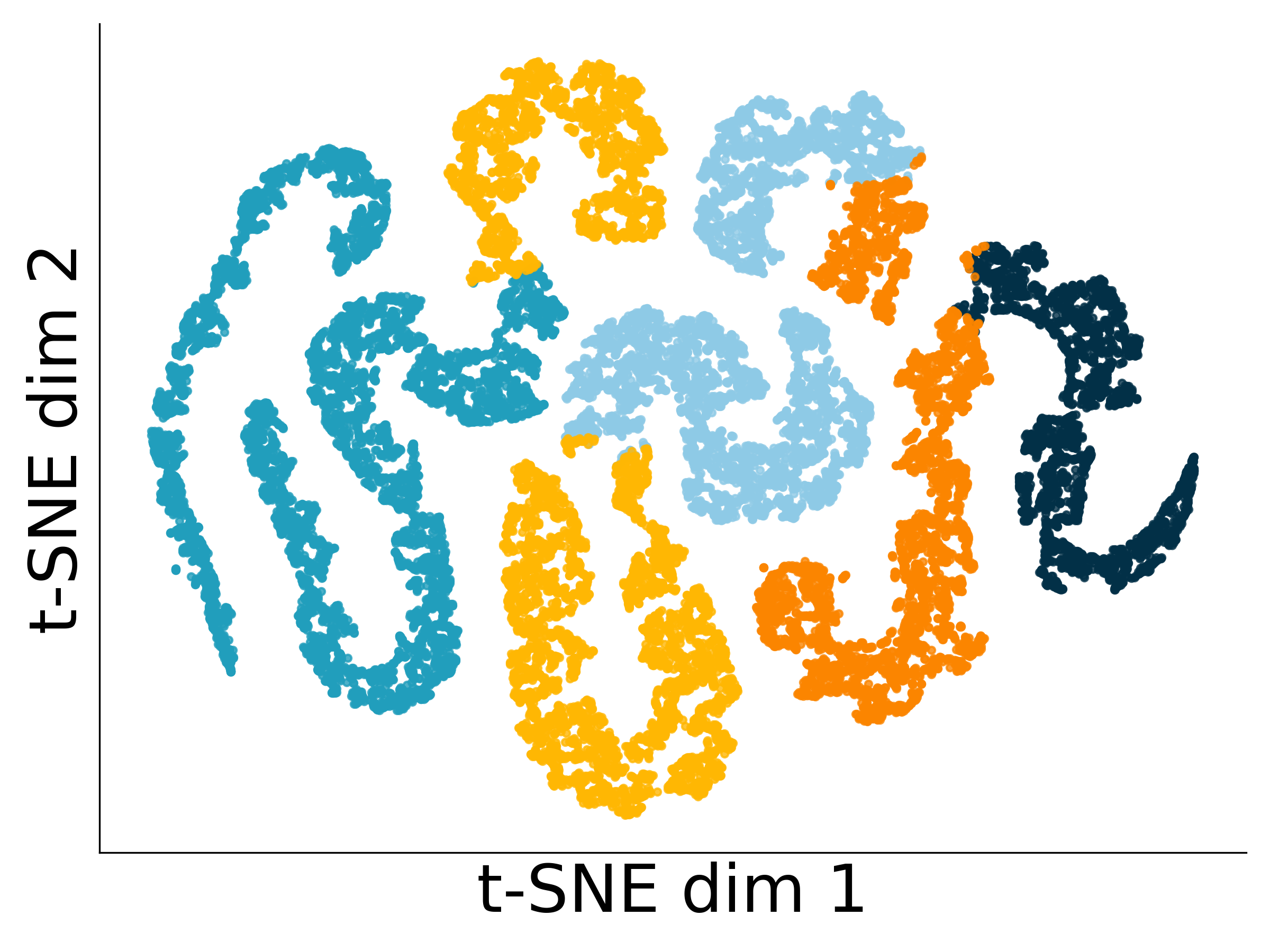}\hfill
    \includegraphics[width=0.48\linewidth]{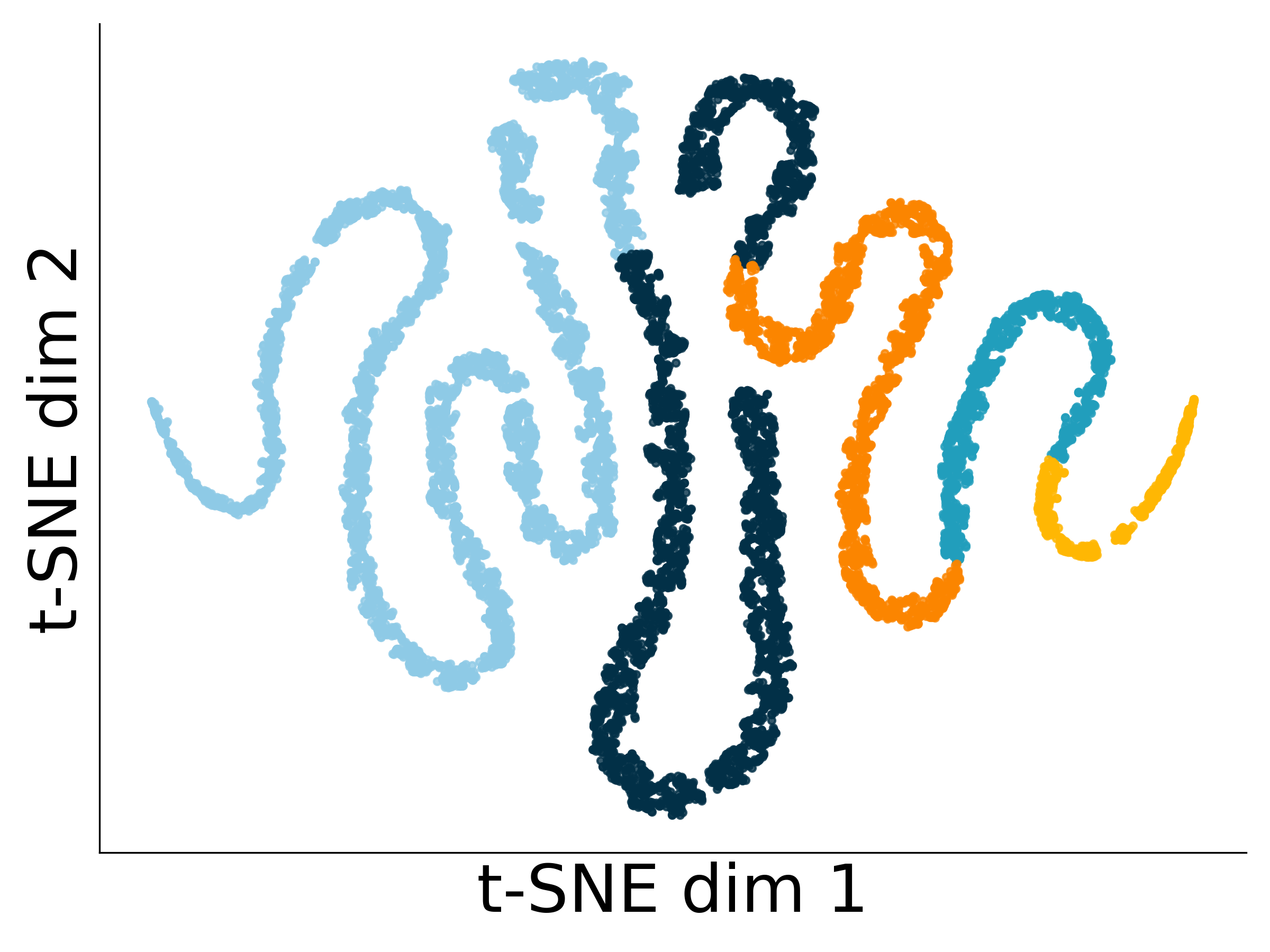}\\[0.3em]
    \includegraphics[width=0.48\linewidth]{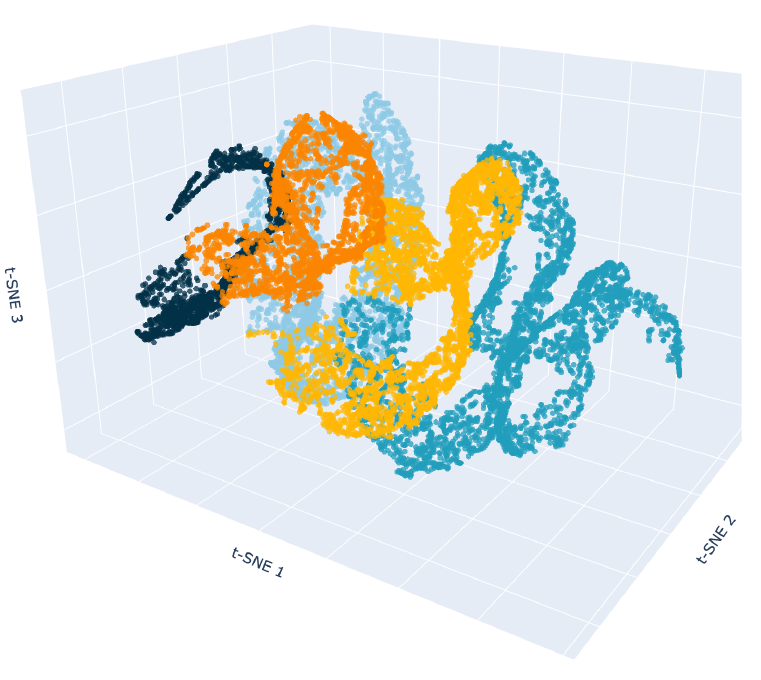}\hfill
    \includegraphics[width=0.48\linewidth]{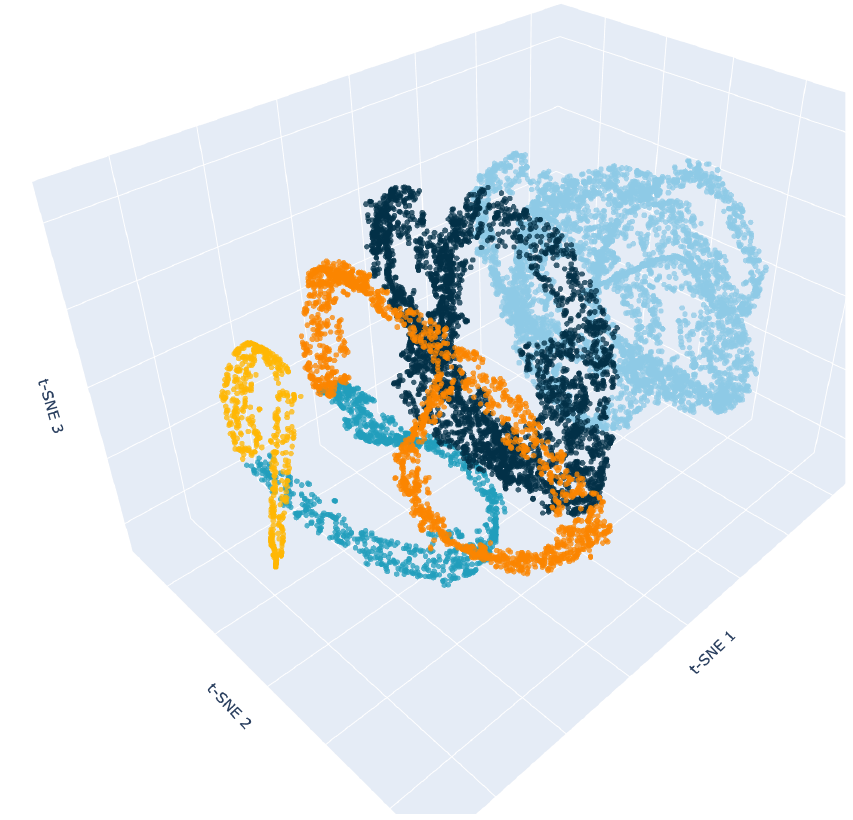}
    \subcaption{t-SNE projections: \textsc{math} (top-left),
    \textsc{gsm8k} (top-right), and 3D views on the bottom. For GSM8K, the transition between the cluster is \textbf{0 - 2 - 4 - 1 - 3} according to the step average statistics. Which align with the t-SNE projections in different dimension. 
}
    \label{fig:tsne_grid}
  \end{minipage}
\end{figure}

\section{Reasoning-State Transition Analysis}
\label{sec:transition-analysis}
To understand the structure of reasoning trajectories produced by LRMs 
and to motivate transition-level intervention, we analyze how these models move 
through their own reasoning process. Our analysis proceeds in three stages: 
segmentation, clustering, and characterization of the resulting trajectory 
dynamics. We segment each reasoning trace at sentence-initial discourse markers, rather than paragraph breaks, since the latter often split a coherent thought.

\paragraph{Cluster induction from mean-pool step features.}
Rather than commit to a fixed number of human pre-defined thought
types~\cite{chen2025seal}, we embed each segmented thought as the mean
of its last-layer hidden states\cite{yu2026explain, wu_ctrls}, under a base reference model
(Qwen3-4B-Base), z-score the resulting features per dataset, and apply
$K$-means with $K{=}5$ to recover latent reasoning states; details are
deferred to Section~\ref{sec:method:states}. The number of clusters
$K$ is a free parameter that controls granularity. The surface
discourse marker of a segment is treated as auxiliary information
rather than as the category itself, which decouples segmentation
granularity from clustering granularity and lets us steer at any
desired resolution.

\paragraph{Correct and incorrect traces visit different numbers of clusters.}
A first signal that the recovered cluster structure is failure-mode-aware comes 
from comparing correct and incorrect traces on matched difficulty. Incorrect 
traces visit substantially more distinct clusters than correct ones 
(Figure~\ref{fig:token-counts}): the model bounces among a wider set of 
reasoning modes rather than progressing through a focused subset. This is 
consistent with prior reports that excessive thought transitions correlate 
with degraded reasoning quality~\citep{wang2025thoughts,fu2024efficiently}, 
and provides early empirical motivation for a control mechanism that can keep 
the trajectory within a productive subset of states.

\paragraph{Latent trajectories from average step position.}
To characterize how the model traverses these clusters over time, we compute 
the mean step position at which each cluster appears across traces, separately 
for each dataset. Sorting clusters by this mean position reveals a clear and 
reproducible latent trajectory: traces begin in setup-like clusters, progress 
through exploration and calculation clusters, and conclude in consolidation 
clusters. This trajectory structure holds across all four datasets we examine 
(\textsc{gsm8k}, \textsc{aqua}, \textsc{logiqa}, \textsc{math}), suggesting 
that the recovered latent state structure reflects a genuine progression of 
reasoning rather than dataset-specific artifacts. We further use LLM as a judge to label each cluster by the dominant reasoning role 
of its segments, the resulting labels follow 
the same ordering as the position-based trajectory above.

\paragraph{Transition structure and self-loops.}
We plot inter-step cluster transitions as both heatmaps 
(Figure~\ref{fig:trajectory_heatmap_raw}) and tSNE projections 
(Figure~\ref{fig:trajectory_tsne}). Two findings stand out. First, when 
clusters are reordered by mean step position, the dominant off-diagonal 
mass of the heatmap forms a band moving from setup to consolidation 
clusters, recovering the same trajectory structure identified above 
through statistical analysis. Second, the heatmaps exhibit a pronounced 
diagonal: a substantial fraction of transitions are \emph{self-loops}, 
in which the model remains in the same cluster from one step to the 
next. While some level of cluster-internal transition is expected, the heaviness of the diagonal of the incorrect traces directly visualizes the failure mode that motivates this work.

\section{Contrastive Transition Vectors}
\label{sec:justification}

The method introduced in Section~\ref{sec:method} extracts a per-cluster
steering vector as the residual-stream contrast between crossers and
stayers,
\begin{equation*}
v_c^{(\ell)} \;=\; \bar h_{c,\rightarrow}^{(\ell)}
\;-\; \bar h_{c,\circlearrowright}^{(\ell)},
\end{equation*}
and adds a positive multiple of $v_c^{(\ell)}$ to the residual stream of
the steering target whenever the model is detected to be self-looping in
cluster $c$. This section explains why this is a principled local
control primitive for breaking self-loops, even though it does not
attempt to approximate the full transition gradient of an autoregressive
model.

\paragraph{Setup.}
Fix a decoder layer $\ell$, a step $i$ whose prefix $\xi_i$ is assigned to
cluster $z_i = c$, and the residual-stream activation
$h := h_i^{(\ell)} \in \mathbb{R}^{d}$ at the discourse-marker token that
opens step $i$. Let
\begin{equation}
p_c(e) \;:=\; \Pr\!\big[\, z_{i+1} \neq c \;\big|\; z_i = c,\;\; h \mapsto h + e \,\big]
\end{equation}
denote the probability that the next step exits cluster $c$ when the
residual at layer $\ell$ is offset by $e \in \mathbb{R}^{d}$, with
$p_c(0)$ the unperturbed exit probability. The goal of the intervention
is a small offset $e$ that increases $p_c(e)$ relative to $p_c(0)$.

\begin{lemma}[Local exit response]
\label{lem:exit-response}
If $p_c$ is differentiable at $e = 0$, then
\begin{equation}
p_c(e) \;=\; p_c(0) \;+\; \nabla_{\!e}\, p_c(0)^{\top} e \;+\; o\!\big(\|e\|_{2}\big).
\end{equation}
In particular, any sufficiently small $e$ with
$\langle e,\, \nabla_{\!e}\, p_c(0)\rangle > 0$ raises the exit
probability to first order.
\end{lemma}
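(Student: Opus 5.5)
The first claim is simply the definition of (Fréchet) differentiability of $p_c:\mathbb{R}^d\to[0,1]$ at the origin, so I will unpack that definition and then read off the sign condition. Differentiability at $e=0$ means there is a linear functional, represented by the vector $g:=\nabla_{\!e}\,p_c(0)$, such that the remainder
\[
r(e) \;:=\; p_c(e) - p_c(0) - g^{\top} e
\]
satisfies $|r(e)|/\|e\|_2 \to 0$ as $e\to 0$. This is exactly the statement $r(e)=o(\|e\|_2)$, which gives the displayed expansion. The only thing to be careful about is that the gradient in the statement is the Fréchet derivative, not merely a vector of partial derivatives. Since the hypothesis is full differentiability, these coincide, and no further regularity (for instance, of the map from residual offsets to the next-step cluster assignment) needs to be established.

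For the ``in particular'' clause, the phrase ``sufficiently small $e$'' has to be made precise, because it is not true for every small $e$ with positive inner product. An $e$ nearly orthogonal to $g$ could have $g^\top e$ dominated by $r(e)$. I will therefore interpret the clause directionally.

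\begin{itemize}
\item Fix a direction $u$ with $\langle u,g\rangle>0$, and set $e=\alpha u$ with $\alpha>0$.
\item Then
\[
p_c(\alpha u)-p_c(0) \;=\; \alpha\big(\langle u,g\rangle + r(\alpha u)/\alpha\big).
\]
\item Since $r(\alpha u)/\alpha = \|u\|_2\cdot r(\alpha u)/\|\alpha u\|_2 \to 0$, there is $\alpha_0>0$ such that $|r(\alpha u)|/\alpha < \langle u,g\rangle$ for all $0<\alpha<\alpha_0$.
\item For such $\alpha$, $p_c(\alpha u)>p_c(0)$, and the increase equals $\alpha\langle u,g\rangle$ to first order.
\end{itemize}

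The same argument works uniformly over any cone $\{e:\langle e,g\rangle \ge \kappa\|e\|_2\|g\|_2\}$ with $\kappa>0$. Choosing $\|e\|_2$ small enough that $|r(e)|<\kappa\|g\|_2\|e\|_2$ gives $p_c(e)-p_c(0)\ge g^\top e - |r(e)| > 0$.

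The main obstacle is therefore not analytic but interpretive. I must fix the quantifier in ``sufficiently small $e$'' as scaling along a fixed direction, or as staying inside a cone bounded away from the hyperplane $g^\perp$. Otherwise the second sentence is false as literally read. I will state this reading explicitly, since it is what later justifies applying a small positive multiple $\alpha v_c^{(\ell)}$ of the contrast vector whenever $\langle v_c^{(\ell)},g\rangle>0$.
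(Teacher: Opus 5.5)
Your proposal is correct and follows the same route as the paper, whose entire proof is ``first-order Taylor expansion of $p_c$ at the origin.'' That is exactly your unpacking of differentiability at $e=0$. Your directional and cone reading of ``sufficiently small $e$'' sharpens something the paper leaves implicit: the literal uniform claim can fail for $e$ nearly orthogonal to $\nabla_{\!e}\, p_c(0)$, and your reading is the form actually needed to justify applying a small positive multiple $\alpha v_c^{(\ell)}$.
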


\begin{proof}
First-order Taylor expansion of $p_c$ at the origin.
\end{proof}

Lemma~\ref{lem:exit-response} identifies $\nabla_{\!e}\, p_c(0)$ as the
ideal local control direction. It is, however, not directly accessible:
$p_c$ is induced jointly by the autoregressive generation policy of the
steering target, the cluster classifier on the next step, and the
discourse-marker segmentation rule, none of which we differentiate
through end-to-end. We therefore seek a tractable proxy that only
requires \emph{recorded} residual states from past traces.

\begin{proposition}[Contrastive direction as a Fisher discriminant]
\label{prop:fisher}
Conditional on $z_i = c$, suppose the residual states at layer $\ell$ of
crossers ($z_{i+1} \neq c$) and stayers ($z_{i+1} = c$) are approximately
Gaussian with class means $\mu_c^{\rightarrow}$,
$\mu_c^{\circlearrowright}$ and shared covariance $\Sigma_c$. Then the
linear discriminant maximising the Fisher separation between crossers
and stayers is
\begin{equation}
v_c^{\,\mathrm{Fisher}} \;\propto\; \Sigma_c^{-1}\!\big(\mu_c^{\rightarrow} - \mu_c^{\circlearrowright}\big),
\label{eq:fisher}
\end{equation}
and in the special case $\Sigma_c \propto I$ this reduces to the
unwhitened mean difference
$\mu_c^{\rightarrow} - \mu_c^{\circlearrowright}$ up to a positive scale.
\end{proposition}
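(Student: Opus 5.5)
We maximise the Fisher criterion directly. Write $\Delta := \mu_c^{\rightarrow} - \mu_c^{\circlearrowright}$, and assume $\Sigma_c$ is positive definite, which is implicit in the statement since it uses $\Sigma_c^{-1}$. For a direction $w \in \mathbb{R}^d$, the projected class means are $w^\top \mu_c^{\rightarrow}$ and $w^\top \mu_c^{\circlearrowright}$. Under the shared-covariance assumption, both projected classes have variance $w^\top \Sigma_c w$. The Fisher separation is then
\begin{equation*}
J(w) \;=\; \frac{(w^\top \Delta)^2}{w^\top \Sigma_c w},
\end{equation*}
which is scale-invariant in $w$. The claim therefore amounts to identifying the maximiser of $J$ up to a nonzero scale, and then fixing the sign.

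The key step is a Cauchy--Schwarz argument in the $\Sigma_c$-inner product. Substitute $u = \Sigma_c^{1/2} w$, so that $w^\top \Delta = u^\top \Sigma_c^{-1/2}\Delta$. This gives
\begin{equation*}
J(w) \;=\; \frac{(u^\top \Sigma_c^{-1/2}\Delta)^2}{\|u\|_2^2} \;\le\; \Delta^\top \Sigma_c^{-1} \Delta .
\end{equation*}
Equality holds if and only if $u \parallel \Sigma_c^{-1/2}\Delta$, that is, $w \parallel \Sigma_c^{-1}\Delta$, which is \eqref{eq:fisher}. As a cross-check, one can set $\nabla J = 0$: this yields $\Delta\,(w^\top\Delta)\,(w^\top\Sigma_c w) = \Sigma_c w\,(w^\top\Delta)^2$, and hence $\Sigma_c w \parallel \Delta$.

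Two remarks close the general case. First, $J$ only determines the line, so we fix the orientation so that crossers project higher than stayers. This makes $w^\top \Delta = \Delta^\top\Sigma_c^{-1}\Delta > 0$, matching the sign of $v_c$ used in the method. Second, under the Gaussian model with equal covariances, this direction coincides with the Bayes-optimal LDA boundary normal, because the log-likelihood ratio is affine in $h$ with gradient $\Sigma_c^{-1}\Delta$. This supports the ``discriminant'' reading of the statement, although the Fisher maximisation above already suffices for the claim as stated.

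For the special case, put $\Sigma_c = \sigma^2 I$ with $\sigma > 0$. Then $\Sigma_c^{-1}\Delta = \sigma^{-2}\Delta$, which is a positive multiple of the unwhitened mean difference.

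The main obstacle is not the algebra but pinning down which Fisher criterion is intended and handling degeneracy. If $\Sigma_c$ is singular, as happens with empirical covariances in $d \gg n$, the inverse must be replaced by a pseudo-inverse restricted to the support, or by a ridge term $\Sigma_c + \lambda I$. We would state this caveat and note that $\lambda \to \infty$ recovers the mean difference. This also links the result back to the estimator $v_c^{(\ell)}$, whose class means are the empirical $\bar h$'s.
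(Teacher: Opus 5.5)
Your proof is correct and takes essentially the same route as the paper, whose proof just asserts the two facts you use: under shared-covariance Gaussians the log-likelihood ratio is affine with coefficient $\Sigma_c^{-1}(\mu_c^{\rightarrow}-\mu_c^{\circlearrowright})$, and this same direction maximises the Fisher Rayleigh quotient. Your Cauchy--Schwarz argument in the whitened variable $u=\Sigma_c^{1/2}w$, together with the sign convention and the $\Sigma_c=\sigma^2 I$ specialisation, supplies the verification of the second fact that the paper leaves implicit.
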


\begin{proof}
Under shared-covariance Gaussian class-conditionals the log-likelihood
ratio between the two classes is affine in the activation vector with
linear coefficient $\Sigma_c^{-1}(\mu_c^{\rightarrow} - \mu_c^{\circlearrowright})$;
the same direction maximises the Rayleigh quotient of between-class
mean separation to within-class variance.
\end{proof}

\begin{figure*}[t]
    \centering
    \includegraphics[width=0.98\linewidth]{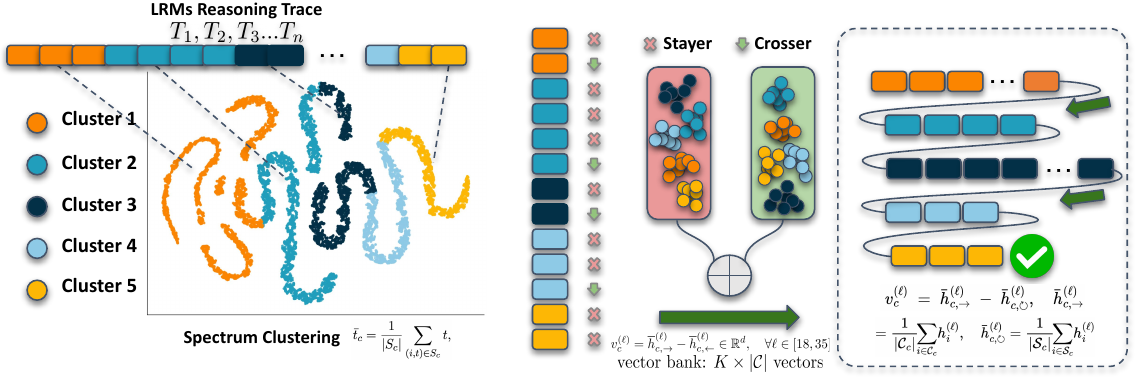}
    \caption{Pipeline for cluster-conditioned reasoning-state steering.
    Stage~I embeds each segmented step with a base reference model
    $\mathcal{M}_{\text{emb}}$ and clusters the resulting features into
    $K{=}5$ latent reasoning states. Stage~II re-aggregates cached
    residual-stream activations of the steering target
    $\mathcal{M}_{\text{tgt}}$ into one crosser-minus-stayer direction
    per (cluster, decoder layer). Stage~III is the online controller:
    after each step closes, it re-classifies the step in cluster space,
    detects self-loops, and patches the residual stream of
    $\mathcal{M}_{\text{tgt}}$ along the corresponding direction during
    the next step.
}
    \label{fig:framework}
\end{figure*}

\section{Methodology}
\label{sec:method}

Building on the trajectory analysis in
Section~\ref{sec:transition-analysis} and the local justification in
Section~\ref{sec:justification}, we propose a training-free framework
that operationalises self-loop intervention as cluster-conditioned
activation steering. As illustrated in Figure~\ref{fig:framework}, the
framework has three offline stages and one online controller:
(i) mean-pool embedding of every step using a base reference model;
(ii) per-dataset $K$-means on the embeddings to recover $K{=}5$ latent
reasoning states; (iii) re-aggregation of cached residual-stream
activations of the steering target into per-(cluster, layer)
crosser-minus-stayer directions; and (iv) at decoding time, online
classification of each completed step into a cluster, detection of
self-loops, and residual-stream patching along the corresponding
direction.

The pipeline runs two models with different roles, which we keep
distinct throughout the paper: an embedding model
$\mathcal{M}_{\text{emb}}$ (Qwen3-4B-Base in our main runs), used purely
to compute step features and cluster assignments, and a steering target
$\mathcal{M}_{\text{tgt}}$ (Qwen3-4B-Thinking-2507 in the main results)\cite{yang2025qwen3},
the deployed model whose residual stream is intervened on and whose
generations are evaluated. The split is deliberate:
$\mathcal{M}_{\text{emb}}$ has not been chat-tuned, so its hidden states
depend on step content rather than on instruction-following geometry,
and it provides stable cluster features that do not drift as the target
model is changed.

\subsection{Latent Reasoning-State Induction}
\label{sec:method:states}

\paragraph{Mean-pool step features.}
For each segmented trace $(q, T_{1}, \ldots, T_{n})$ we run a single
forward pass of $\mathcal{M}_{\text{emb}}$ over the concatenation
$(q \,\|\, T_{1} \,\|\, \cdots \,\|\, T_{n})$ and extract the
last-layer hidden states $\{h_t^{\,\text{emb}}\}$. For step $T_{i}$
spanning tokens $[s_{i}, e_{i})$, the step feature is the token-mean
\begin{equation}
z_{i} \;=\; \frac{1}{e_{i} - s_{i}} \sum_{t=s_{i}}^{e_{i}-1} h_t^{\,\text{emb}}
\;\in\; \mathbb{R}^{d_{\text{emb}}}.
\label{eq:meanpool}
\end{equation}
We use the simple mean-pool feature in~\eqref{eq:meanpool} rather than
the cumulative-Gram eigenvalue feature considered in earlier drafts of
this work. The cumulative-Gram feature integrates over the entire
prefix, which causes a single trace's step features to drift slowly with
position and produces partitions where almost every step of a trace
falls in the same cluster (mean longest mono-cluster run of $25$--$30$
out of $\sim\!35$-step traces in our corpus). The mean-pool feature in
contrast partitions by step content and yields trajectories that
genuinely move between clusters (mean longest mono-cluster run drops to
$7$--$15$).

\paragraph{Per-dataset $K$-means.}
We z-score the step features $\{z_{i}\}$ across all steps of a dataset,
fitting per-dimension $(\mu, \sigma)$ once per dataset, and run
$K$-means with $K = 5$ clusters and seed $42$ on the z-scored features.
The result is a per-dataset partition with cluster ids
$z_{i} \in \{0, \ldots, K{-}1\}$ and centroids $\{\mu_{c}\}_{c=0}^{K-1}$,
all stored in z-scored coordinates.

We fit clusters \emph{per dataset} rather than pool across datasets
because Hungarian alignment of cluster ids across (gsm8k, aqua, logiqa,
math) fell below $0.4$ in our setting, indicating that the partitions
are not in fact aligned and that forcing a shared id space would mix
unlike states. Each cluster therefore has a dataset-local id; the
cluster $c$ on logiqa is not the same latent state as the cluster $c$
on math, and the steering vector bank below is fit and used per dataset.

\subsection{Per-Cluster Crosser-Minus-Stayer Vectors}
\label{sec:method:vectors}

\paragraph{Crosser and stayer pools.}
Given the cluster ids from Section~\ref{sec:method:states}, we define a
binary indicator on each non-terminal step $i$ of every trace,
\begin{equation}
\beta_{i} \;=\; \mathbf{1}\!\big[\, z_{i} \neq z_{i+1} \,\big].
\end{equation}
Steps with $\beta_{i} = 1$ are \emph{crossers} (the trajectory exits
cluster $z_{i}$ on the next step); steps with $\beta_{i} = 0$ are
\emph{stayers} (the trajectory remains in $z_{i}$). For each cluster
$c \in \{0, \ldots, K{-}1\}$ we collect all of its crossers and all of
its stayers across the dataset:
\begin{equation}
\mathcal{C}_{c} \;=\; \{\, i : z_{i} = c,\; \beta_{i} = 1 \,\},
\qquad
\mathcal{S}_{c} \;=\; \{\, i : z_{i} = c,\; \beta_{i} = 0 \,\}.
\end{equation}
We deliberately do \emph{not} condition the pools on trace correctness
at extraction time. Per-cluster pools are already thin -- a few hundred
crossers per cluster on a typical 4-dataset corpus -- and conditioning
on correctness halves them again without a measurable hit-rate gain in
preliminary runs. Trace correctness only enters the pipeline at
evaluation time, where we restrict the held-out stayer prefixes to
incorrect traces (Section~\ref{sec:hit-rate-protocol}).

\paragraph{Residual-state contrast.}
Independently of the embedding pass, we record at every step $i$ the
residual-stream activation $h_{i}^{(\ell)} \in \mathbb{R}^{d}$ of the
steering target $\mathcal{M}_{\text{tgt}}$ at the discourse-marker token
that opens step $i$, for every decoder layer $\ell$ in a patched range
$\mathcal{L}$ (see hyperparameters below). The states are recorded once,
during a one-shot cache-extraction pass over the same corpus. For each
cluster $c$ and each patched layer $\ell$ we form the contrastive
direction
\begin{equation}
v_{c}^{(\ell)}
\;=\;
\bar h_{c,\rightarrow}^{(\ell)}
\;-\;
\bar h_{c,\circlearrowright}^{(\ell)},
\quad
\bar h_{c,\rightarrow}^{(\ell)}
=
\frac{1}{|\mathcal{C}_{c}|}\!\!\sum_{i \in \mathcal{C}_{c}}\! h_{i}^{(\ell)},
\quad
\bar h_{c,\circlearrowright}^{(\ell)}
=
\frac{1}{|\mathcal{S}_{c}|}\!\!\sum_{i \in \mathcal{S}_{c}}\! h_{i}^{(\ell)}.
\label{eq:transition-vector}
\end{equation}
By Proposition~\ref{prop:fisher}, $v_{c}^{(\ell)}$ is the unwhitened
Fisher direction separating crossers from stayers in cluster $c$ at
layer $\ell$, and by Lemma~\ref{lem:exit-response} a positive multiple
of $v_{c}^{(\ell)}$ raises the local probability of exiting $c$ to
first order under the residual-stream perturbation model of
Section~\ref{sec:justification}.

A cluster $c$ for which either pool is empty after extraction-time
filtering receives no vector and is excluded from the online controller
and the eval. In our corpus this affects the intro-tag cluster -- whose
steps are the leading discourse-marker openers, filtered before
residual caching -- on most datasets, and one further cluster on
Gemma-4-E2B / aqua.

\subsection{Online Controller: Self-Loop Detection and Residual Patching}
\label{sec:method:online}

At inference time we run $\mathcal{M}_{\text{tgt}}$ with greedy decoding
and run $\mathcal{M}_{\text{emb}}$ alongside it as an online cluster
classifier. Both models share an architecture, so residual-stream
offsets recorded at layer $\ell$ of $\mathcal{M}_{\text{tgt}}$ are
compatible with the corresponding layer of the deployed model.

\paragraph{Step boundaries and online cluster assignment.}
We maintain a rolling text buffer of the tokens emitted by
$\mathcal{M}_{\text{tgt}}$ and apply the same sentence-initial
discourse-marker regex used during segmentation. When a marker fires we
treat the substring between the previous boundary and the new marker as
the just-completed step $T_{i}$. We pass $T_{i}$ through
$\mathcal{M}_{\text{emb}}$ -- reusing a running KV cache so only the
new tokens require a forward pass -- mean-pool the last-layer hidden
states over the step span, z-score with the dataset's saved
$(\mu, \sigma)$, and assign the cluster id by nearest centroid:
\begin{equation}
z_{i} \;=\; \mathop{\arg\min}_{c \in \{0, \ldots, K-1\}} \;
\big\| z_{i}^{\,\text{feat}} - \mu_{c} \big\|_{2}.
\label{eq:online-classify}
\end{equation}

\paragraph{Self-loop gate.}
We declare a self-loop when the most recently completed step lies in
the same cluster as the one before it: $z_{i} = z_{i-1}$. This minimal
two-step gate is what triggers the intervention. It avoids steering on
the model's first visit to a cluster (which is generally productive)
and only acts when the model has already failed to leave the cluster
once on its own; it requires only the two most recent cluster ids and
adds no learned parameters.

\paragraph{Residual-stream patching.}
When the gate fires for a cluster $c$ that has a steering vector, the
controller patches the residual stream of $\mathcal{M}_{\text{tgt}}$
across the entirety of the next decoded step. At every patched layer
$\ell \in \mathcal{L}$ and every token position $t$ inside the next
step we apply
\begin{equation}
\widetilde h^{(\ell)}_{t}
\;\leftarrow\;
h^{(\ell)}_{t}
\;+\;
\alpha \cdot \big\| h^{(\ell)}_{t} \big\|_{2} \cdot
\frac{v_{c}^{(\ell)}}{\big\| v_{c}^{(\ell)} \big\|_{2}}.
\label{eq:patch}
\end{equation}
Two design choices appear in~\eqref{eq:patch}. \emph{Per-layer
unit-normalisation} of $v_{c}^{(\ell)}$ followed by scaling by the
ambient residual norm $\|h^{(\ell)}_{t}\|_{2}$ makes the offset
insensitive to the absolute residual scale, which varies substantially
across decoder depth; a single global $\alpha$ then controls the
intervention strength. \emph{Every-token, every-patched-layer
application} (rather than a single-token boundary patch) is what we
found in practice to produce reliable cluster-level effects at the small
$\alpha$ values that keep generations coherent. Once the next step
closes and is re-classified by~\eqref{eq:online-classify}, the patch is
removed and the controller returns to standard greedy decoding.

\paragraph{Hyperparameters.}
We use $K = 5$ clusters per dataset (KMeans, seed $42$, on z-scored
mean-pool features). The patched layer set $\mathcal{L}$ is the upper
half of the decoder for the small models (layers $18$--$35$ for
Qwen3-4B-Thinking-2507, $17$--$34$ for Gemma-4-E2B) and the mid-stack
for Qwen3-32B-Thinking ($18$--$35$ out of $64$ layers); deeper-stack
patching follows the rationale from prior work that mid-to-late layers
carry the abstract conceptual content most relevant to reasoning
behaviour. The intervention
strength $\alpha$ is tuned per model: $\alpha = 0.05$ for the two
Qwen-Thinking models, and $\alpha = 0.025$ for Gemma-4-E2B, above which
generations on Gemma degenerate into token loops -- a useful sanity
signal that the patched offset is exerting real effect on the residual
stream rather than being absorbed as noise. All other choices ($K$,
seed, layer-range scheme, every-token patch mode) are fixed across
datasets and models.

\begin{table}[t]
    \centering
    \caption{Per-cluster hit-rate (\%) on $N{=}30$ stayer prefixes from
held-out incorrect traces, across four reasoning datasets and three
steering targets. \sophia{} patches the next step's residual stream
with $+\alpha v_{c}^{(\ell)}$; \textbf{Negative steering} uses
$-\alpha v_{c}^{(\ell)}$; \textbf{Random steering} is a norm-matched
Gaussian; \textbf{Greedy decoding} is the unintervened baseline.
Cluster ids are dataset- and model-local; layer set $\mathcal{L}$ and
intervention strength $\alpha$ follow Section~\ref{sec:method:online}.
For Gemma-4-E2B aqua only three clusters survive intro-tag filtering.}
    \label{tab:results_per_cluster}
    \setlength{\tabcolsep}{4pt}
    \renewcommand{\arraystretch}{1.2}
\resizebox{\textwidth}{!}{%
    \begin{tabular}{l cccc cccc cccc cccc}
    \toprule
     & \multicolumn{4}{c}{gsm8k} & \multicolumn{4}{c}{aqua} & \multicolumn{4}{c}{logiqa} & \multicolumn{4}{c}{math} \\
    \cmidrule(lr){2-5} \cmidrule(lr){6-9} \cmidrule(lr){10-13} \cmidrule(lr){14-17}
    Method & C0 & C1 & C2 & C3 & C0 & C1 & C2 & C3 & C0 & C1 & C2 & C3 & C0 & C1 & C2 & C3 \\
    \midrule
    \multicolumn{17}{l}{\textit{Qwen3-32B-Thinking}} \\
    \midrule
    Greedy decoding   & \textbf{28.6} & 89.7 & \underline{20.0} & 42.9 & 53.3 & \textbf{34.5} & 17.9 & \underline{44.8} & \textbf{16.7} & \textbf{11.5} & \underline{81.8} & 0.0 & 10.3 & \underline{75.9} & 73.3 & \underline{29.6} \\
    Random steering   & 20.7 & 82.8 & 17.2 & 40.7 & 48.3 & \textbf{34.5} & 17.9 & 39.3 & \textbf{16.7} & \underline{7.7} & 79.2 & 0.0 & 10.0 & 73.3 & 75.9 & 21.4 \\
    Negative steering & \underline{26.7} & \underline{96.7} & 17.2 & \textbf{70.0} & \underline{83.3} & 27.6 & \underline{20.7} & 25.0 & 3.3 & 4.0 & 57.9 & 0.0 & \textbf{20.7} & 64.3 & \textbf{86.2} & 20.0 \\
    \rowcolor{sealblue}
    SOPHIA            & \textbf{28.6} & \textbf{100.0} & \textbf{23.3} & \underline{67.9} & \textbf{92.9} & \underline{31.0} & \textbf{21.4} & \textbf{70.8} & \underline{13.3} & \underline{7.7} & \textbf{85.2} & 0.0 & \underline{16.7} & \textbf{82.8} & \underline{81.2} & \textbf{39.3} \\
    \midrule
    \multicolumn{17}{l}{\textit{Gemma-4-E2B}} \\
    \midrule
    Greedy decoding   & \textbf{98.3} & 81.2 & 0.0 & 32.0 & \underline{33.3} & \underline{5.0} & 33.3 & \underline{34.8} & \underline{40.0} & 9.1 & 0.0 & 6.9 & \textbf{40.9} & 4.8 & 10.0 & \underline{20.0} \\
    Random steering   & 94.2 & \underline{83.6} & \textbf{21.4} & \underline{35.7} & \textbf{39.3} & 4.5 & \textbf{50.0} & 30.2 & 36.7 & \textbf{15.4} & 0.0 & \textbf{10.7} & 19.0 & 3.8 & \underline{20.0} & 12.5 \\
    Negative steering & 86.7 & 71.5 & 5.0 & 24.0 & 26.9 & 4.3 & 16.7 & \textbf{42.5} & 20.0 & \textbf{15.4} & 0.0 & 9.1 & 18.5 & \textbf{10.3} & 5.3 & 18.5 \\
    \rowcolor{sealblue}
    SOPHIA            & \underline{97.4} & \textbf{85.1} & \underline{15.4} & \textbf{41.7} & 30.8 & \textbf{29.2} & \underline{37.5} & 32.9 & \textbf{40.7} & \underline{13.6} & 0.0 & \underline{10.3} & \underline{34.8} & \underline{5.3} & \textbf{30.8} & \textbf{26.3} \\
    \midrule
    \multicolumn{17}{l}{\textit{Qwen3-4B-Thinking-2507}} \\
    \midrule
    Greedy decoding   & \underline{7.1} & 26.7 & 17.9 & 25.0 & 0.0 & \underline{66.7} & 27.6 & \textbf{23.3} & 3.4 & \underline{31.0} & 23.3 & 50.0 & 17.4 & \underline{50.0} & \underline{11.1} & \underline{11.1} \\
    Random steering   & 6.7 & \underline{33.3} & 10.0 & 62.5 & \underline{3.4} & 63.3 & \underline{34.5} & \underline{20.7} & \underline{6.7} & \underline{31.0} & \underline{26.7} & \underline{60.0} & 22.2 & \textbf{63.0} & 3.7 & 3.8 \\
    Negative steering & 3.7 & 10.0 & \underline{20.0} & \underline{75.0} & 0.0 & 46.7 & 0.0 & 6.7 & 0.0 & 23.1 & 20.0 & 34.5 & \underline{33.3} & 34.5 & 0.0 & 5.6 \\
    \rowcolor{sealblue}
    SOPHIA            & \textbf{20.7} & \textbf{43.3} & \textbf{62.1} & \textbf{100.0} & \textbf{51.7} & \textbf{88.5} & \textbf{76.7} & \textbf{23.3} & \textbf{30.0} & \textbf{80.0} & \textbf{37.9} & \textbf{65.5} & \textbf{70.0} & 48.0 & \textbf{48.0} & \textbf{17.9} \\
    \bottomrule
    \end{tabular}%
    }
\end{table}

\section{Steerable Reasoning State Transitions}
\label{sec:results}

The trajectory analysis in Section~\ref{sec:transition-analysis}
identified self-loops as a dominant failure mode of LRM reasoning, and
Section~\ref{sec:method} extracted, for each latent cluster, a
crosser-minus-stayer steering direction predicted by
Section~\ref{sec:justification} to be a usable local exit primitive. We
now ask whether that direction has the predicted causal effect: can a
small additive offset along $v_{c}^{(\ell)}$ actually push the model out
of cluster $c$ on the next step, when the model would otherwise have
self-looped?

\subsection{Evaluation Protocol: Per-Cluster Hit Rate}
\label{sec:hit-rate-protocol}

To test whether $v_{c}$ breaks self-loops, we sample $N{=}30$ stayer
prefixes per cluster from held-out incorrect traces. A stayer prefix is
a step $i$ in an incorrect trace with $z_{i} = z_{i-1}$ (the model has
already failed to leave $c$ once on its own) and $z_{i} = c$. For each
prefix we replay the model under four conditions, decode the next
reasoning step, and re-classify the produced step into a cluster using
$\mathcal{M}_{\text{emb}}$ (Section~\ref{sec:method:online}). The
per-cluster \emph{hit rate} is the fraction of stayer prefixes for
which the produced next step is assigned to a cluster other than $c$.
This metric is intentionally local: it asks only whether the
intervention moved the trajectory out of $c$ on the very next step,
decoupling the test of the steering direction itself from any
downstream effect on final-answer accuracy or token count. A high
per-cluster hit rate is necessary for the framework to be useful, but
it is the headline scientific claim of this paper rather than the
end-to-end task gain.

The four conditions are: \textbf{Greedy decoding} (no intervention; the
natural rate at which the model spontaneously escapes $c$ from a
two-step stayer prefix); \textbf{Random steering} (a norm-matched
Gaussian vector, deterministic per (cluster, dataset, sample\_id,
step\_idx), which controls for the effect of perturbing the residual
stream at all); \textbf{Negative steering} ($-\alpha\, v_{c}^{(\ell)}$
applied via Equation~\ref{eq:patch}, which should keep the model in
$c$ if $v_{c}^{(\ell)}$ is sign-meaningful); and \sophia{}
($+\alpha\, v_{c}^{(\ell)}$ via Equation~\ref{eq:patch}). Cluster ids
are dataset-local and model-local: Hungarian alignment of clusters
across (model, dataset) pairs fell below $0.4$, so we re-fit the latent
state structure for each pair rather than impose a shared identity on
clusters that are not in fact aligned across models.

\subsection{Results}
\label{sec:hit-rate-results}

Table~\ref{tab:results_per_cluster} reports per-cluster hit rates across
three models and four reasoning datasets. Several patterns are robust
across the grid.

\paragraph{\sophia{} breaks self-loops where greedy decoding cannot.}
Of the cells where \sophia{} is applicable, it attains the highest hit
rate in the majority. The improvement over Greedy decoding is largest
where Greedy is weakest -- that is, on stayer prefixes the model would
otherwise have remained stuck on. On Qwen3-4B-Thinking-2507, gsm8k C3
rises from $25.0\%$ to $100\%$, gsm8k C2 from $17.9\%$ to $62.1\%$,
aqua C0 from $0.0\%$ to $51.7\%$, and logiqa C2 from $31.0\%$ to
$80.0\%$. The pattern persists at scale on Qwen3-32B-Thinking
(e.g.\ aqua C0: $53.3\% \to 92.9\%$, math C3: $29.6\% \to 39.3\%$) and
on a different model family with Gemma-4-E2B (aqua C1:
$5.0\% \to 29.2\%$, math C2: $10.0\% \to 30.8\%$). These cells are
direct causal evidence that the contrastive direction $v_{c}$ encodes a
usable ``leave-$c$'' signal, and that the signal transfers across model
sizes and architectures without re-tuning the extraction procedure.

\paragraph{The effect is direction-specific, not magnitude-driven.}
Random steering at the same residual norm produces hit rates that are
typically indistinguishable from Greedy decoding, confirming that
arbitrary residual-stream perturbations do not by themselves move the
trajectory out of $c$. Negative steering, which uses the same vector
with reversed sign, generally lowers the hit rate further (e.g.\ on
Qwen3-4B aqua C2 it falls from $27.6\%$ to $0.0\%$, and on aqua C4 from
$23.3\%$ to $6.7\%$), as Lemma~\ref{lem:exit-response} predicts if
$v_{c}$ is positively aligned with $\nabla_{e}\, p_{c}(0)$. A handful of
cells -- e.g.\ Qwen3-4B gsm8k C3 -- show negative steering also
improving over greedy; we read these as evidence that some clusters are
unstable equilibria where any sufficiently large perturbation forces an
exit. Together, the random and negative controls rule out the
hypothesis that the SOPHIA effect is a generic decoding-perturbation
artefact.The hit-rate gain from \sophia{} over Greedy varies sharply across
clusters \emph{within a single (model, dataset) pair}. On Qwen3-4B
logiqa, for instance, SOPHIA produces a $+49$pt gain on C2 but only a
$+15$pt gain on C4; on math, it gains $+37$pts on C2 but actually
trails Greedy on C1. A single global ``break-the-loop'' direction
would instead produce a roughly uniform shift across clusters within a
dataset. The cluster-specific spread is what we would expect if the
corrective signals for different reasoning states genuinely live in
different residual-stream directions -- the prediction made in
Section~\ref{sec:transition-analysis} -- and it is the empirical
justification for indexing the steering bank by source cluster $c$
rather than collapsing to a single global direction.

\section{Conclusion}
We reframed inference-time control of reasoning as steering at the level of
latent state transitions, with self-loops as the dominant failure mode.
\sophia{} realizes this view training-free, pairing a bank of transition-specific
residual-stream directions with an online controller that detects self-loops
and intervenes before the next step. Across multiple models and benchmarks,
transition-aware steering outperforms state-agnostic alternatives, with gains
concentrated where greedy decoding cannot escape. Strengthening the controller,
aligning states across datasets, and extending beyond self-loops are natural
next steps. 

\textbf{LLM usage disclosure:}
AI tools were used for minor writing, formatting, and visual editing assistance for generating illustrative figures


\bibliographystyle{plainnat}
\bibliography{main}

\clearpage
\appendix
\section{Appendix}

This appendix states the control objective that motivates the method, while the main text keeps the presentation operational.
Let $q_\phi(z\mid \xi_t)$ be the latent-state abstraction and let $P^0(\cdot\mid i,\xi_t)$ be the next-state distribution induced by the base model from source state $i$ and prefix $\xi_t$.
An activation intervention $u_t=(\alpha_t,v_t)$ changes the residual stream and induces a controlled transition distribution $P^{u_t}(\cdot\mid i,\xi_t)$.

\begin{definition}[Transition steering vector]
For source state $i$, target state $j$, and context $c_t$, a vector $v_{i\to j}^{(\ell)}(c_t)$ is a transition steering vector if there exists a positive strength $\alpha$ such that
\begin{equation}
    P^{(\alpha,v_{i\to j}^{(\ell)}(c_t))}(j\mid i,\xi_t)
    >
    P^{0}(j\mid i,\xi_t).
\end{equation}
\end{definition}

The online controller can be written as a policy $\kappa$ that maps the current prefix, inferred state, and context to an intervention.
Given a desired transition policy $\pi^\star(\cdot\mid i,\xi_t)$, the ideal objective is
\begin{equation}
\label{eq:control-objective}
    \min_{\kappa}
    \mathbb{E}_{x,t}
    \left[
    \mathrm{KL}\left(
    \pi^\star(\cdot\mid i_t,\xi_t)
    \middle\|
    P^{\kappa}(\cdot\mid i_t,\xi_t)
    \right)
    +
    \lambda \|e_t\|_2^2
    \right],
\end{equation}
where $e_t=\alpha_t v_t$ and $i_t=\arg\max_i q_\phi(i\mid \xi_t)$.
Equation~\ref{eq:control-objective} is not solved directly.
The implemented method approximates it with a finite vector bank, a data-estimated target policy, and a gate that suppresses unsupported interventions.
The implementation in Section~\ref{sec:method} replaces the parametric latent-state model $q_\phi$ with a per-dataset $K$-means partition over mean-pool step features, and the controller $\kappa$ with the two-step self-loop gate of Section~\ref{sec:method:online}; the steering vectors $\{v_{c}^{(\ell)}\}$ of Section~\ref{sec:method:vectors} are the finite vector bank that instantiates $v_{i\to j}^{(\ell)}(c_t)$ in~Eq.~\ref{eq:control-objective}.

\section{Detailed Phase-Trajectory Analysis}
\label{app:phase-trajectory}

This appendix expands the trajectory analysis summarised in
Section~\ref{sec:transition-analysis}. We document, for each of the four
reasoning datasets used to build the cluster bank, the per-dataset
cluster$\to$phase mapping, the raw and phase-ordered transition matrices,
the t-SNE geometry of the step embeddings, and the aggregate
dwell/forward/backward statistics that justify the absorbing-$P_5$
view used by the method.

\paragraph{Setup.}
We embed every reasoning step using the mean-pool last-layer features of
the embedding model $\mathcal{M}_{\text{emb}}$ described in
Section~\ref{sec:method:states}, and partition the resulting
step-level representations with KMeans ($K{=}5$) on each of four datasets:
\textsc{gsm8k}, \textsc{aqua}, \textsc{logiqa}, and \textsc{math}.
The cluster identifiers returned by KMeans are arbitrary, so they cannot be
read directly across datasets; we instead annotate each cluster with its
empirical \emph{average step position} and a short semantic reading distilled
from 10 random member steps.

\paragraph{Cluster semantics.}
Tables~\ref{tab:cluster_gsm8k}--\ref{tab:cluster_math} list, for each
dataset, the five clusters together with their mean step index, member
count, and the dominant behaviour observed in their members. A consistent
five-stage pattern emerges across datasets: \emph{problem reading} $\to$
\emph{first computation} $\to$ \emph{mid verification} $\to$ \emph{late
deliberation} $\to$ \emph{rumination tail}. We adopt this as a canonical
phase labelling $P_{1}{\to}P_{5}$ in what follows; the \textsc{Phase} column
of each table records the mapping between KMeans IDs and these phase labels.

\begin{table}[ht]
  \centering
  \small
  \begin{tabular}{cclrl}
    \toprule
    Cluster & Phase & Avg.\ step & $n$ & Reading \\
    \midrule
    0 & $P_1$ &  6.0 & 5{,}162 & Problem reading and decomposition \\
    2 & $P_2$ & 16.8 & 3{,}748 & First sanity checks on early arithmetic \\
    4 & $P_3$ & 31.5 & 1{,}941 & Re-interpretation; partial answers \\
    1 & $P_4$ & 52.4 & 1{,}075 & Final-answer scaffolds; repeated checks \\
    3 & $P_5$ & 83.2 &   524   & Extreme re-examination; failure regime \\
    \bottomrule
  \end{tabular}
  \caption{\textsc{gsm8k}: cluster $\to$ phase mapping with mean step index
  and one-line semantics.}
  \label{tab:cluster_gsm8k}
\end{table}

\begin{table}[ht]
  \centering
  \small
  \begin{tabular}{cclrl}
    \toprule
    Cluster & Phase & Avg.\ step & $n$ & Reading \\
    \midrule
    4 & $P_1$ &  7.5 & 3{,}335 & Problem read; immediate second-guessing \\
    1 & $P_2$ & 22.2 & 2{,}996 & Comparing intermediate results to options \\
    2 & $P_3$ & 41.1 & 2{,}190 & Restarting derivations \\
    0 & $P_4$ & 63.1 & 1{,}551 & Failing to match a multiple-choice option \\
    3 & $P_5$ & 89.0 & 1{,}253 & Exhaustion; cyclic re-examination \\
    \bottomrule
  \end{tabular}
  \caption{\textsc{aqua}: cluster $\to$ phase mapping with mean step index
  and one-line semantics.}
  \label{tab:cluster_aqua}
\end{table}

\begin{table}[ht]
  \centering
  \small
  \begin{tabular}{cclrl}
    \toprule
    Cluster & Phase & Avg.\ step & $n$ & Reading \\
    \midrule
    1 & $P_1$ &  7.3 & 5{,}480 & Scanning logical conditions \\
    2 & $P_2$ & 23.7 & 4{,}504 & First attempt at case analysis \\
    4 & $P_3$ & 43.9 & 3{,}953 & Early commitments and case rejections \\
    0 & $P_4$ & 66.1 & 3{,}398 & Long case-by-case eliminations \\
    3 & $P_5$ & 90.2 & 2{,}879 & Deep nested case analysis; worst tail \\
    \bottomrule
  \end{tabular}
  \caption{\textsc{logiqa}: cluster $\to$ phase mapping with mean step index
  and one-line semantics.}
  \label{tab:cluster_logiqa}
\end{table}

\begin{table}[ht]
  \centering
  \small
  \begin{tabular}{cclrl}
    \toprule
    Cluster & Phase & Avg.\ step & $n$ & Reading \\
    \midrule
    1 & $P_1$ &  7.8 & 6{,}790 & Formula recall and equation setup \\
    3 & $P_2$ & 22.0 & 6{,}160 & Execution of chosen approach; first checks \\
    0 & $P_3$ & 38.4 & 5{,}026 & Algebraic re-derivations \\
    4 & $P_4$ & 58.0 & 3{,}925 & Alternative geometric/algebraic framings \\
    2 & $P_5$ & 80.3 & 2{,}695 & Minimal-content stalling; longest tails \\
    \bottomrule
  \end{tabular}
  \caption{\textsc{math}: cluster $\to$ phase mapping with mean step index
  and one-line semantics.}
  \label{tab:cluster_math}
\end{table}

\paragraph{Raw transition matrices.}
We compute the empirical inter-step transition counts
between KMeans clusters and visualise them in
Fig.~\ref{fig:trajectory_heatmap_raw_app}. The same heatmaps appear in
Fig.~\ref{fig:trajectory_heatmap_raw} of the main text as part of the
two-panel transition / t-SNE figure; we reproduce them here in a
single-row layout to make the per-dataset comparison easier to read.
Plotted in raw cluster-ID order the matrices appear scrambled, because the
ID-to-phase permutation differs across datasets
(Tabs.~\ref{tab:cluster_gsm8k}--\ref{tab:cluster_math}),
so the same underlying dynamics produce different cell patterns.

\begin{figure}[ht]
  \centering
  \includegraphics[width=0.24\linewidth]{figures/heatmap_gsm8k.pdf}\hfill
  \includegraphics[width=0.24\linewidth]{figures/heatmap_aqua.pdf}\hfill
  \includegraphics[width=0.24\linewidth]{figures/heatmap_logiqa.pdf}\hfill
  \includegraphics[width=0.24\linewidth]{figures/heatmap_math.pdf}
  \caption{Inter-step cluster transition counts in raw KMeans-ID order on
  (left to right) \textsc{gsm8k}, \textsc{aqua}, \textsc{logiqa},
  \textsc{math}. The dominant off-diagonal cells differ across datasets
  because KMeans IDs are arbitrary; reordering by mean step position
  (Fig.~\ref{fig:trajectory_heatmap}) recovers a common structure.}
  \label{fig:trajectory_heatmap_raw_app}
\end{figure}

\paragraph{Phase-ordered transition matrices.}
After reindexing the clusters of each dataset using the
\textsc{Cluster}$\to$\textsc{Phase} maps in
Tabs.~\ref{tab:cluster_gsm8k}--\ref{tab:cluster_math}, we form the
phase-transition matrix
$P_{ij}=\Pr[\,\phi_{t+1}{=}j \mid \phi_{t}{=}i\,]$
on the reordered sequence $\phi_{1{:}T}$ of every trace.
All four matrices collapse to a strictly upper-bidiagonal absorbing form
(Fig.~\ref{fig:trajectory_heatmap}): a dominant dwell mass on the diagonal,
a single forward transition $P_{i,i+1}$, an absorbing terminal state
$P_{55}{=}1$, and skip-ahead probabilities $P_{i,j>i+1}$ that are zero up to
the resolution of the data.
Dwell probability rises monotonically from $P_1$ to $P_4$, indicating that
later phases are progressively harder to leave. This is the empirical
basis for the dwell-time-gated controller used in
Section~\ref{sec:method:online}: because $P_5$ is absorbing, an unguarded
self-loop intervention that pushes the trace forward at every step would
trap the model in the rumination tail.

\begin{figure}[ht]
  \centering
  \includegraphics[width=0.24\linewidth]{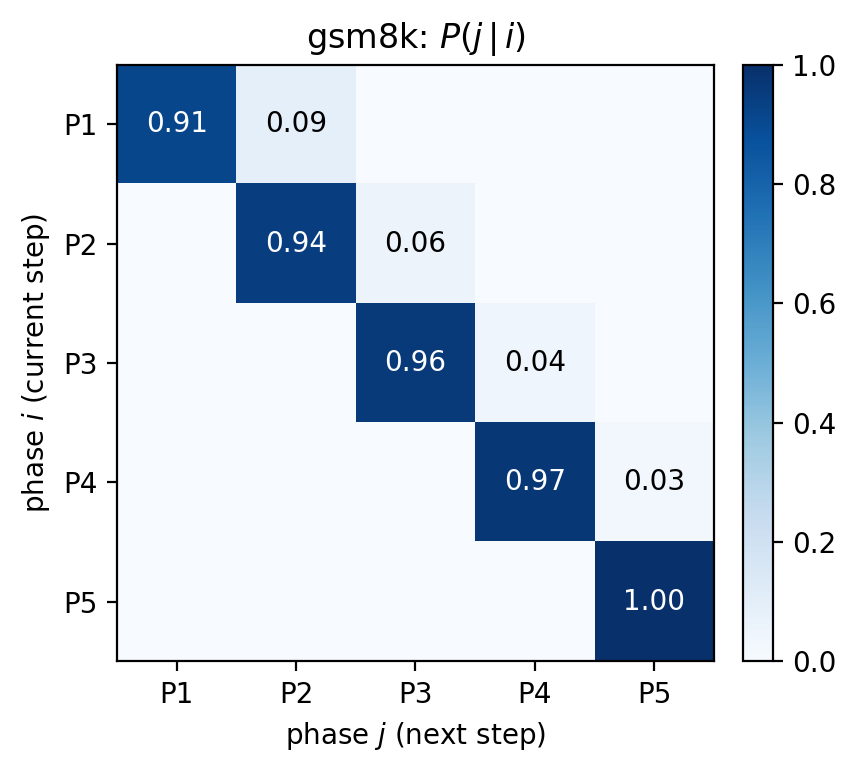}\hfill
  \includegraphics[width=0.24\linewidth]{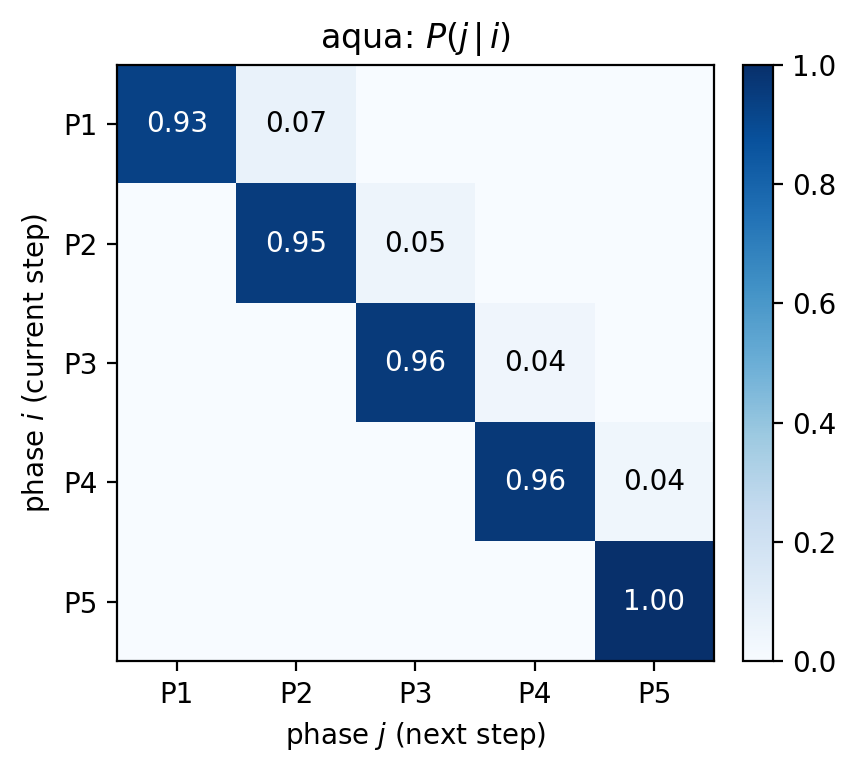}\hfill
  \includegraphics[width=0.24\linewidth]{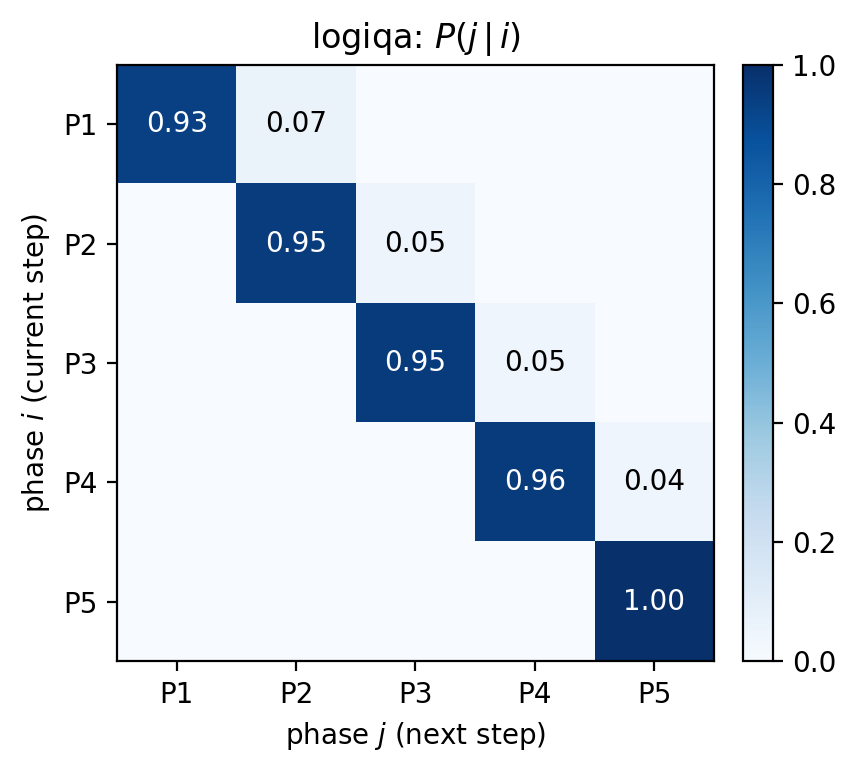}\hfill
  \includegraphics[width=0.24\linewidth]{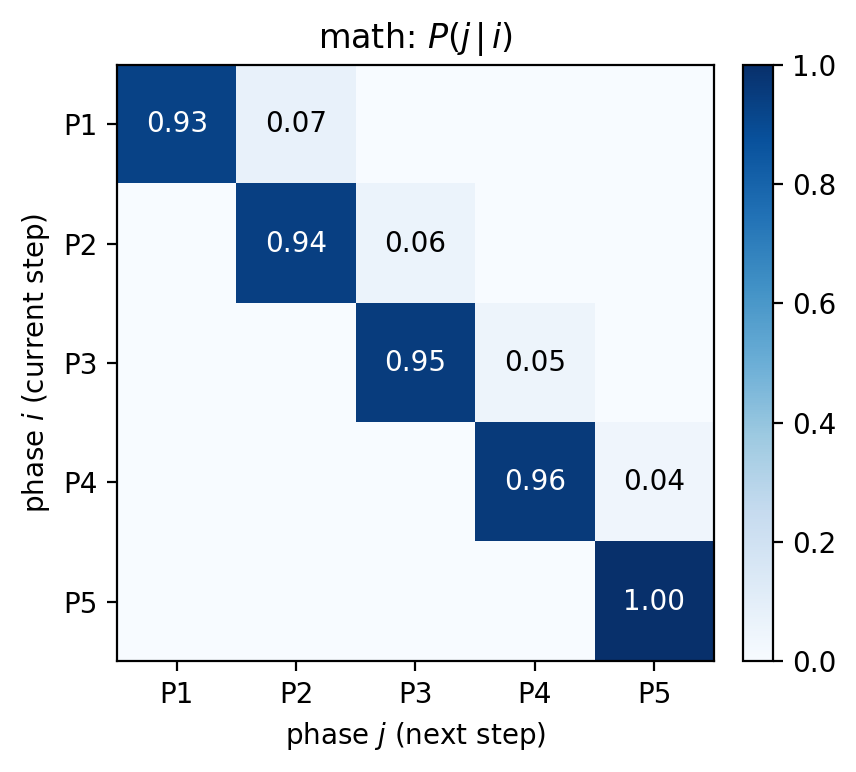}
  \caption{Phase-ordered transition matrix $P(j\,|\,i)$ on (left to right)
  \textsc{gsm8k}, \textsc{aqua}, \textsc{logiqa}, \textsc{math} after
  reindexing clusters by mean step position
  (Tabs.~\ref{tab:cluster_gsm8k}--\ref{tab:cluster_math}). All four matrices
  are upper-bidiagonal with an absorbing $P_5$, evidencing a near-deterministic
  $P_1{\to}P_2{\to}P_3{\to}P_4{\to}P_5$ trajectory.}
  \label{fig:trajectory_heatmap}
\end{figure}

\paragraph{t-SNE geometry.}
Figure~\ref{fig:trajectory_tsne} shows 2D t-SNE projections of the step
embeddings on each dataset, coloured by cluster. The clusters form
elongated, locally coherent ribbons rather than isotropic blobs, and adjacent
phases occupy adjacent regions of the manifold --- the geometric signature
of a one-dimensional progression rather than a categorical partition.
This is the figure referenced from the introduction
(Section~\ref{sec:transition-analysis}); its full per-dataset layout is
deferred here to keep the main text compact.

\begin{figure}[ht]
  \centering
  \includegraphics[width=0.24\linewidth]{figures/tsne_gsm8k.png}\hfill
  \includegraphics[width=0.24\linewidth]{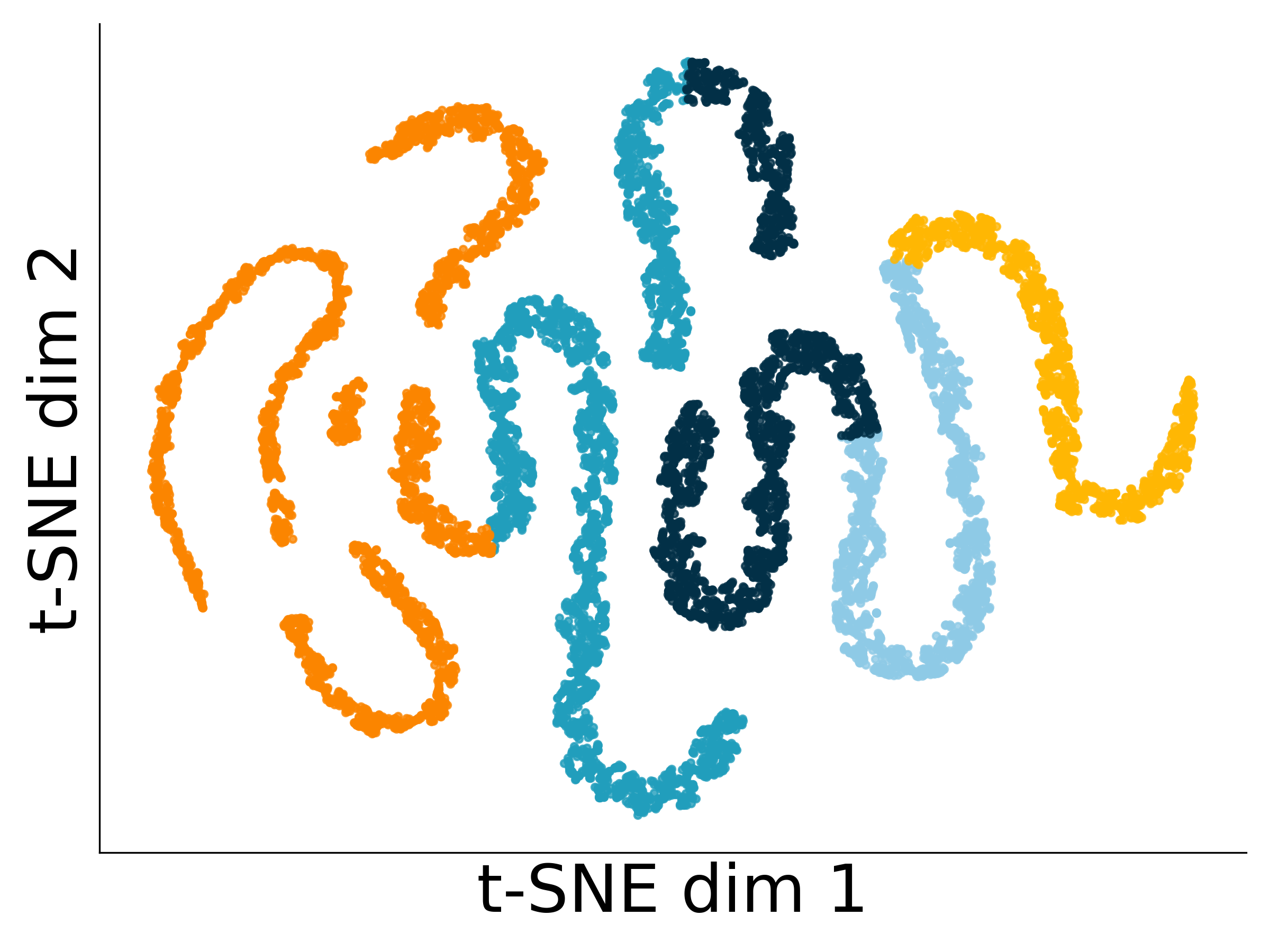}\hfill
  \includegraphics[width=0.24\linewidth]{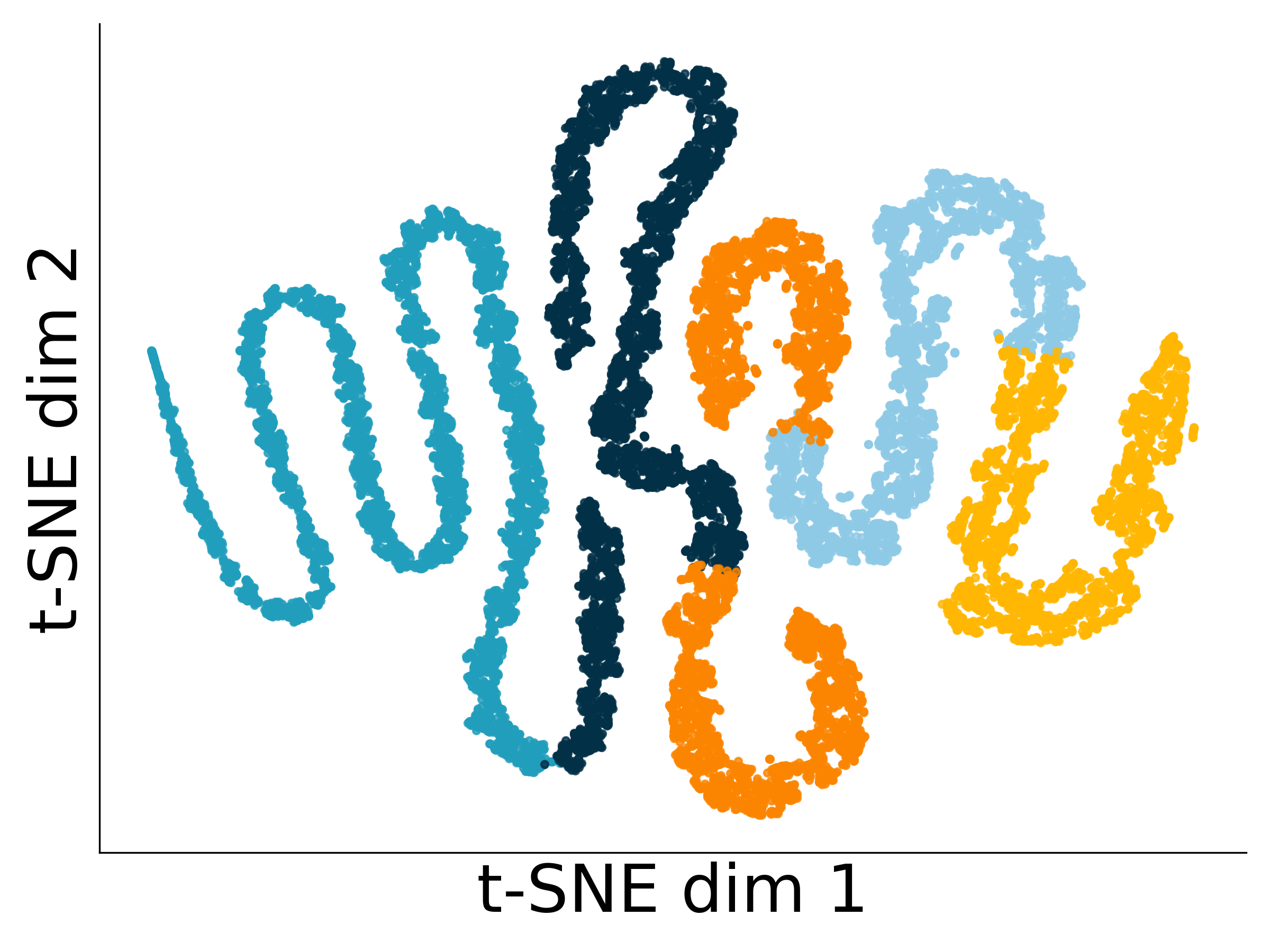}\hfill
  \includegraphics[width=0.24\linewidth]{figures/tsne_math.png}
  \caption{2D t-SNE of mean-pool last-layer features for reasoning
  steps, coloured by cluster, on (left to right) \textsc{gsm8k},
  \textsc{aqua}, \textsc{logiqa}, \textsc{math}. Clusters appear as
  contiguous ribbons that connect end to end, consistent with a
  one-dimensional reasoning trajectory.}
  \label{fig:trajectory_tsne}
\end{figure}

\paragraph{Quantitative summary.}
Table~\ref{tab:trajectory_stats} aggregates the four datasets along three
axes derived from the empirical transition counts:
the diagonal mass (probability a step \emph{stays} in its current phase),
the upper-triangular forward mass (probability of advancing), and the
lower-triangular backward mass (regression to an earlier phase).
Across all four datasets, dwell mass exceeds $0.93$, forward mass is small
but non-zero, and backward mass is exactly zero: no trace ever re-enters an
earlier phase. The terminal absorption probability $P(P_5\,|\,P_5){=}1$ holds
verbatim, so once a trace enters the rumination tail it cannot
leave.\footnote{The vanishing backward mass is partly a consequence of phase
ordering by mean step index; the non-trivial empirical findings are (i) all
skip-ahead transitions $P_{i,j>i+1}$ are zero, and (ii) the absorption
$P_{55}{=}1$ is exact, not approximate.}

\begin{table}[ht]
  \centering
  \small
  \begin{tabular}{lrrcccc}
    \toprule
    Dataset & traces & steps & dwell & forward & backward & $P(P_5\,|\,P_5)$ \\
    \midrule
    \textsc{gsm8k}  & 500 & 12{,}450 & 0.937 & 0.063 & 0.000 & 1.000 \\
    \textsc{aqua}   & 254 & 11{,}325 & 0.953 & 0.047 & 0.000 & 1.000 \\
    \textsc{logiqa} & 500 & 20{,}214 & 0.955 & 0.045 & 0.000 & 1.000 \\
    \textsc{math}   & 500 & 24{,}596 & 0.948 & 0.052 & 0.000 & 1.000 \\
    \bottomrule
  \end{tabular}
  \caption{Phase-trajectory statistics on four reasoning benchmarks.
  ``Dwell'' is total diagonal mass of the phase-transition matrix,
  ``forward'' is upper-triangular mass, and ``backward'' is lower-triangular
  mass after reindexing clusters by mean step position. All four datasets
  exhibit a strictly forward chain with an absorbing terminal phase.}
  \label{tab:trajectory_stats}
\end{table}

\paragraph{Takeaway.}
Together, the per-dataset cluster semantics
(Tabs.~\ref{tab:cluster_gsm8k}--\ref{tab:cluster_math}), the contrast between
raw and phase-ordered heatmaps
(Figs.~\ref{fig:trajectory_heatmap_raw_app},~\ref{fig:trajectory_heatmap}), the
ribbon-like t-SNE geometry, and the zero-backward / unit-absorption statistics
establish that reasoning traces of $\mathcal{M}_{\text{emb}}$ follow a
near-deterministic five-phase trajectory rather than moving freely between
qualitatively distinct reasoning modes. This justifies the
crosser-minus-stayer interventions of Section~\ref{sec:method:vectors},
$h \leftarrow h + \alpha\,v_{c}^{(\ell)}$: because transitions are sparse and
forward, the crosser-minus-stayer direction has a well-defined geometric
meaning at each phase, and because $P_5$ is absorbing, dwell-time gating
(Section~\ref{sec:method:online}) is essential to prevent the controller from
settling into the rumination tail.

\clearpage
\section*{NeurIPS Paper Checklist}

\begin{enumerate}

\item \textbf{Claims}\\
Question: Do the main claims made in the abstract and introduction accurately reflect the paper's contributions and scope?\\
Answer: \answerYes{}\\
Justification: The abstract and introduction claim a transition-level formulation, an implemented extraction/controller pipeline, and generated artifact summaries. Missing end-to-end steering results are explicitly marked as TBD.

\item \textbf{Limitations}\\
Question: Does the paper discuss the limitations of the work performed by the authors?\\
Answer: \answerYes{}\\
Justification: The draft states that full steering, transfer, and ablation evaluations are pending and avoids unsupported numerical claims.

\item \textbf{Theory assumptions and proofs}\\
Question: For each theoretical result, does the paper provide the full set of assumptions and a complete proof?\\
Answer: \answerYes{}\\
Justification: The appendix states the local differentiability and shared-covariance assumptions and gives proofs for the local response and contrastive-direction claims.

\item \textbf{Experimental result reproducibility}\\
Question: Does the paper fully disclose the information needed to reproduce the main experimental results?\\
Answer: \answerYes{}\\
Justification: The paper names the run, model, generated report interface, vector variants, and pending evaluation commands needed to reproduce the reported artifact tables.

\item \textbf{Open access to data and code}\\
Question: Does the paper provide open access to the data and code?\\
Answer: \answerNo{}\\
Justification: The local codebase and report script exist, but the draft does not yet provide a public anonymized repository or final release package.

\item \textbf{Experimental setting/details}\\
Question: Does the paper specify all training and test details necessary to understand the results?\\
Answer: \answerYes{}\\
Justification: The setup section gives the current run, model, datasets, artifact counts, and the appendix gives the report command and evaluation variants.

\item \textbf{Experiment statistical significance}\\
Question: Does the paper report error bars or other appropriate statistical significance information?\\
Answer: \answerNA{}\\
Justification: The current draft does not yet report completed end-to-end steering experiments, so significance tests are not applicable to the TBD result tables.

\item \textbf{Experiments compute resources}\\
Question: Does the paper provide sufficient information on compute resources?\\
Answer: \answerNo{}\\
Justification: The draft names the model and artifact scale but does not yet provide complete wall-clock, GPU, or memory accounting for the pending full evaluations.

\item \textbf{Code of ethics}\\
Question: Does the research conform with the NeurIPS Code of Ethics?\\
Answer: \answerYes{}\\
Justification: The work analyzes and intervenes on model-generated reasoning traces using standard benchmark-style data and does not involve human subjects or private data collection.

\item \textbf{Broader impacts}\\
Question: Does the paper discuss potential positive and negative societal impacts?\\
Answer: \answerYes{}\\
Justification: The motivation discusses efficient and controllable reasoning; a final submission should expand this into a dedicated broader-impact paragraph.

\item \textbf{Safeguards}\\
Question: Does the paper describe safeguards for responsible release of high-risk data or models?\\
Answer: \answerNA{}\\
Justification: The paper does not release a new model or high-risk scraped dataset in its current form.

\item \textbf{Licenses for existing assets}\\
Question: Are existing assets credited and are licenses respected?\\
Answer: \answerNo{}\\
Justification: The draft cites the main methodological references and names benchmark families, but final dataset and model license details still need to be completed before submission.

\item \textbf{New assets}\\
Question: Are new assets introduced in the paper well documented?\\
Answer: \answerYes{}\\
Justification: The report script and generated tables are new local assets and are documented by command in the appendix.

\item \textbf{Crowdsourcing and research with human subjects}\\
Question: For crowdsourcing experiments and research with human subjects, does the paper include instructions, screenshots, and compensation details?\\
Answer: \answerNA{}\\
Justification: The work does not involve crowdsourcing experiments or research with human subjects.

\item \textbf{Institutional review board approvals}\\
Question: Does the paper describe IRB approvals or equivalent review for human-subject research?\\
Answer: \answerNA{}\\
Justification: The work does not involve human subjects, so IRB approval is not applicable.

\item \textbf{Declaration of LLM usage}\\
Question: Does the paper describe LLM usage if it is an important component of the research?\\
Answer: \answerYes{}\\
Justification: LLMs are the object of study and the method intervenes on their hidden states during reasoning; this is described throughout the main text.

\end{enumerate}

\end{document}